\setlist[itemize]{topsep=0pt, leftmargin=3mm}
\newtheorem{theorem}{Theorem}[section]
\newtheorem{lemma}[theorem]{Lemma}
\newtheorem{definition}[theorem]{Definition}
\newcommand\blfootnote[1]{%
  \begingroup
  \renewcommand\thefootnote{}\footnote{#1}%
  \addtocounter{footnote}{-1}%
  \endgroup
}
\title{Effective Diversity in Population Based Reinforcement Learning}
\author{
  Jack Parker-Holder$^{\ast}$\\
  University of Oxford \\
   \texttt{jackph@robots.ox.ac.uk} \\
   \And
  Aldo Pacchiano$^{\ast}$\\
  UC Berkeley\\
   \texttt{pacchiano@berkeley.edu} \\
   \And
  Krzysztof Choromanski\\
  Google Brain Robotics\\
   \texttt{kchoro@google.com} \\
   \And
  Stephen Roberts \\
  University of Oxford \\
   \texttt{sjrob@robots.ox.ac.uk} \\
   \\[-50.0ex]
   }
\begin{document}

\maketitle

\begin{abstract}
Exploration is a key problem in reinforcement learning, since agents can only learn from data they acquire in the environment. With that in mind, maintaining a population of agents is an attractive method, as it allows data be collected with a diverse set of behaviors. This behavioral diversity is often boosted via multi-objective loss functions. However, those approaches typically leverage mean field updates based on pairwise distances, which makes them susceptible to cycling behaviors and increased redundancy. In addition, explicitly boosting diversity often has a detrimental impact on optimizing already fruitful behaviors for rewards. As such, the reward-diversity trade off typically relies on heuristics. Finally, such methods require behavioral representations, often handcrafted and domain specific. In this paper, we introduce an approach to optimize all members of a population simultaneously. Rather than using pairwise distance, we measure the volume of the entire population in a behavioral manifold, defined by task-agnostic behavioral embeddings. In addition, our algorithm \emph{Diversity via Determinants} (DvD), adapts the degree of diversity during training using online learning techniques. We introduce both evolutionary and gradient-based instantiations of DvD and show they effectively improve exploration without reducing performance when better exploration is not required.
 \end{abstract}

\blfootnote{$^\ast$Equal contribution.}

\vspace{-0mm}
\section{Introduction }\label{section::introduction}
\vspace{-0mm}

Reinforcement Learning (RL) considers the problem of an agent taking actions in an environment to maximize total (discounted/expected) reward \cite{Sutton1998}. An agent can typically only learn from experience acquired in the environment, making exploration crucial to learn high performing behaviors. 

Training a \emph{population} of agents is a promising approach to gathering a diverse range of experiences, often with the same (wall-clock) training time \cite{recht2011hogwild, PBT}. Population-based methods are particularly prominent in the Neuroevolution community \cite{neuronature}, but have recently been of increased interest in RL \cite{p3s, liu2018emergent, egpg, p3s, evorl_multi, cemrl}. One particularly exciting class of neuroevolution methods is Quality Diversity (QD, \cite{qdnature}) where algorithms explicitly seek high performing, yet diverse behaviors \cite{Cully2015RobotsTC}. However, these methods often have a few key shortcomings.

Typically, each agent is optimized with a mean field assumption, only considering its \emph{individual} contribution to the population's joint reward-diversity objective. Consequently, cycles may arise, whereby different members of the population constantly switch between behaviors. This may prevent any single agent exploiting a promising behavior. This phenomenon motivates the MAP-Elites algorithm \cite{Mouret2015IlluminatingSS, Cully2015RobotsTC}, whereby only one solution may lie in each quadrant of a pre-defined space.

This pre-defined space is a common issue with QD \cite{whitesonBC}. Behavioral characterizations (BCs) often have to be provided to the agent, for example, in locomotion it is common to use the final $(x,y)$ coordinates. As such, automating the discovery of BCs is an active research area \cite{gaier2020automating}. In RL, gradients are usually taken with respect to the actions taken by an agent for a set of sampled states. This provides a natural way to embed a policy behavior \cite{p3s, diversitydriven, doan2019attractionrepulsion}. Incidentally, the geometry induced by such embeddings has also been used in popular trust region algorithms \cite{schulman2017proximal, schulman2015trust}.  

In this paper we formalize this approach and define \textit{behavioral embeddings} as the actions taken by policies. We measure the diversity of the entire population as the volume of the inter-agent kernel (or similarity) matrix, which we show has theoretical benefits compared to pairwise distances. In our approach agents are still optimizing for their $\textit{local}$ rewards and this signal is a part of their hybrid objective that also takes into account the $\textit{global}$ goal of the population - its diversity.

However, we note that it remains a challenge to set the diversity-reward trade off. If misspecified, we may see fruitful behaviors disregarded. Existing methods either rely on a (potentially brittle) hyperparameter, or introduce an annealing-based heuristic \cite{novelty}. To ensure diversity is promoted \textit{effectively}, our approach is to adapt the diversity-reward objective using Thompson sampling \cite{ts_tutorial, ts_russovanroy}. This provides us with a principled means to trade-off reward vs. diversity through the lens of multi-armed bandits \cite{slivkins}. Combining these insights, we introduce Diversity via Determinants (DvD).

DvD is a general approach, and we introduce two implementations: one building upon Evolution Strategies, DvD-ES, which is able to discover diverse solutions in multi-modal environments, and an off-policy RL algorithm, DvD-TD3, which leads to greater performance on the challenging $\mathrm{Humanoid}$ task. Crucially, we show DvD still performs well when diversity is not required. 

This paper is organized as follows. \textbf{(1)} We begin 
by introducing main concepts in Sec. \ref{sec:preliminaries}
and providing strong motivation for our DvD algorithm in Sec. \ref{policyembeddings}. \textbf{(2)} We then present our algorithm in Sec. \ref{sec:algorithm}. \textbf{(3)} In Sec. \ref{sec:related} we discuss related work in more detail. \textbf{(4)} In Sec. \ref{sec:experiments} we provide empirical evidence of the effectiveness of DvD. \textbf{(5)} We conclude and explain broader impact in Sec. \ref{sec:broader_impact} and provide additional technical details in the Appendix (including ablation studies and proofs).

\section{Preliminaries}
\label{sec:preliminaries}
A Markov Decision Process ($\mathrm{MDP}$) is a tuple $(\mathcal{S},\mathcal{A},\mathrm{P},\mathrm{R})$. Here $\mathcal{S}$ and $\mathcal{A}$ stand for the sets of states and actions respectively, such that for $s_t,s_{t+1} \in \mathcal{S}$ and $a_t \in \mathcal{A}$: $\mathrm{P}(s_{t+1}| s_t,a_t)$ is the probability that the system/agent transitions from $s_t$ to $s_{t+1}$ given action $a_t$ and $r_t = r(s_t, a_t, s_{t+1})$ is a reward obtained by an agent transitioning from $s_t$ to $s_{t+1}$ via $a_t$. 

A policy $\pi_{\theta}:\mathcal{S} \rightarrow \mathcal{A}$ is a (possibly randomized) mapping (parameterized by $\theta \in \mathbb{R}^{d}$) from $\mathcal{S}$ to $\mathcal{A}$. Policies $\pi_{\theta}$ are typically represented by neural networks, encoded by parameter vectors $\theta \in \mathbb{R}^d$ (their flattened representations). Model free RL methods are either on-policy, focusing on gradients of the policy from samples \cite{schulman2015trust, schulman2017proximal}, or off-policy, focusing on learning a value function \cite{dqn2013, td3, sac}. 

In on-policy RL, the goal is to optimize parameters $\theta$ of $\pi_{\theta}$ such that an agent deploying policy $\pi_\theta$ in the environment given by a fixed $\mathrm{MDP}$ maximizes $R(\tau) = \sum_{t=1}^T r_t$, the total (expected/discounted) reward over a rollout time-step horizon $T$ (assumed to be finite). The typical objective is as follows:
\begin{equation}
\label{eqn:vanilla_rl_obj}
    J(\pi_{\theta}) = \mathbb{E}_{\tau\sim\pi_{\theta}} [R(\tau)]
\end{equation}
where $P(\tau|\theta) = \rho(s_0)\prod_{t=1}^T P(s_{t+1}|s_t, a_t)\pi_{\theta}(a_t|s_t)$, for initial state probability $\rho(s_0)$ and transition dynamics $P(s_{t+1}|s_t, a_t)$, which is often deterministic. Policy gradient (PG) methods \cite{schulman2017proximal, schulman2015trust}, consider objectives of the following form:
\begin{equation}
    \nabla_{\theta} J(\pi_{\theta}) = \mathbb{E}_{\tau\sim\pi_{\theta}} \left[ \sum_{t=0}^T \nabla_{\theta} \text{log} \pi_{\theta}(a_t|s_t) R(\tau) \right],
\end{equation}
which can be approximated with samples (in the action space) by sampling $a_t\sim \pi_\theta(a_t | s_t)$. 

An alternative formulation is Evolution Strategies (ES, \cite{ES}), which has recently shown to be effective \cite{deepga, choromanski_orthogonal, horia}. 
ES methods optimize Equation \ref{eqn:vanilla_rl_obj} by considering $J(\pi_{\theta})$ to be a blackbox function $F:\mathbb{R}^{d} \rightarrow \mathbb{R}$ taking as input parameters $\theta \in \mathbb{R}^{d}$ of a policy $\pi_{\theta}$ and outputting $R(\tau)$. One benefit of this approach is potentially achieving deep exploration \cite{plappert2018parameter, fortunato2018noisy}. 

A key benefit of ES methods is they naturally allow us to maintain a population of solutions, which has been used to boost diversity. Novelty search methods \cite{novelty, lehmannovelty} go further and explicitly augment the loss function with an additional term, as follows:
\begin{equation}
\label{eqn:novelty_obj}
    J(\pi_{\theta}) = R(\tau_{\theta}) + \lambda d(\tau_{\theta})
\end{equation}
where $\lambda > 0$ is the reward-diversity trade-off. We assume the policy and environment are deterministic, so $\tau_{\theta}$ is the only possible trajectory, and $d(\pi_{\theta^i}) = \frac{1}{M}\sum_{j\in M, j \neq i} || BC(\pi_{\theta}^i) - BC(\pi_{\theta}^j)||_2$ for some $l$-dimensional behavioral mapping $BC: \tau\rightarrow \mathbb{R}^l$. 

This formulation has been shown to boost exploration, and as such, there have been a variety of attempts to incorporate novelty-based metrics into RL algorithms \cite{p3s, doan2019attractionrepulsion, diversitydriven}. Ideally, we would guide optimization in a way which would evenly distribute policies in areas of the embedding space, which correspond to high rewards. However, the policy embeddings (BCs) used are often based on heuristics which may not generalize to new environments \cite{whitesonBC}. In addition, the single sequential updates may lead high performing policies away from an improved reward (as is the case in the $\mathrm{Humanoid}$ experiment in \cite{novelty}). Finally, cycles may become present, whereby the population moves from one area of the feature space to a new area and back again \cite{Mouret2015IlluminatingSS}.

Below we address these issues, and introduce a new objective which updates all agents simultaneously.

\section{Diversity via Determinants}
\label{policyembeddings}
Here we introduce our task agnostic embeddings and formalize our approach to optimize for population-wide diversity.

\subsection{Task Agnostic Behavioral Embeddings}
Many popular policy gradient algorithms \cite{schulman2015trust, schulman2017proximal} consider a setting whereby a new policy $\pi_{\theta_{t+1}}$ is optimized with a constraint on the size of the update. This requirement can be considered as a constraint on the behavior of the new policy \cite{wass}. Despite the remarkable success of these algorithms, there has been little consideration for action-based behavior embeddings for novelty search methods. 

Inspired by this approach, we choose to represent policies as follows: 

\begin{definition}\label{Def:embedding} Let $\theta^i$ be a vector of neural network parameters encoding a policy $\pi_{\theta^i}$ and let $\mathcal{S}$ be a finite set of states. The \textbf{Behavioral Embedding} of $\theta^i$ is defined as: $\phi(\theta^i) = \{\pi_{\theta^i}(.|s)\}_{s\in \mathcal{S}}$.
\end{definition}
This approach allows us to represent the behavior of policies in vectorized form, as $\phi: \theta \rightarrow \mathbb{R}^l$, where $l = |a| \times N$, where $|a|$ is the dimensionality of each action $a\in \mathcal{A}$ and $N$ is the total number of states. When $N$ is the number of states in a finite MDP, the policies are determimistic and the embedding is the as in Definition \ref{Def:embedding}, we have:
\begin{equation}
    \phi(\theta^i) = \phi(\theta^j) \iff \pi_{\theta^i} = \pi_{\theta^j},
\end{equation}
where $\theta^i,\theta^j$ are vectorized parameters. In other words, the policies are the same since they always take the same action in every state of the finite MDP. Note that this does not imply that $\theta^i = \theta^j$.

\subsection{Joint Population Update}

Equipped with this notion of a behavioral embedding, we can now consider a means to compare two policies. Consider a smooth kernel $k$, such that $k(x_1, x_2) \leq 1$, for $x_1, x_2 \in \mathbb{R}^d$. A popular choice of kernel is the squared exponential (SE), defined as follows: 
\begin{equation}
    k_{\mathrm{SE}}(x_1, x_2) = \text{exp}\left(- \frac{||x_1 - x_2||^2}{2l^2} \right), 
\end{equation}
for some length-scale $l>0$. Now moving back to policy embeddings, we extend our previous analysis to the kernel or similarity between two embeddings, as follows:
\begin{equation}
     k(\phi(\theta^i), \phi(\theta^j)) = 1 \iff \pi_{\theta^i} = \pi_{\theta^j}
\end{equation}
We consider two policies to be behaviorally orthogonal if $k(\phi(\theta^i), \phi(\theta^j)) = 0$. 
With a flexible way of measuring inter-policy similarity at hand, we can define the \emph{population-wide} diversity as follows:
\begin{definition}\label{Def:dpp} \textbf{(Population Diversity)} Consider a finite set of $M$ policies, parameterized by $\Theta = \{\theta^{1},...,\theta^{M}\}$, with $\theta^{i} \in \mathbb{R}^{d}$. We denote
$\mathrm{Div}(\Theta) \overset{\mathrm{def}}{=} \mathrm{det}( K(\phi(\theta_t^i), \phi(\theta_t^j))_{i,j=1}^M) = \mathrm{det}(\mathbf{K})$,
where $K:\mathbb{R}^{l} \times \mathbb{R}^{l} \rightarrow \mathbb{R}$ is a given kernel function. Matrix $\mathbf{K}$ is positive semidefinite since all principal minors of $\mathrm{det}(\mathbf{K})$ are nonnegative.
\end{definition}

This formulation is heavily inspired by Determinantal Point Processes (DPPs, \cite{Kulesza}), a mechanism which produces diverse subsets by sampling proportionally to the determinant of the kernel matrix of points within the subset. From a geometric perspective, the determinant of the kernel matrix represents the volume of a parallelepiped spanned by feature maps corresponding to the kernel choice. We seek to maximize this volume, effectively ``filling'' the feature (or behavior) space. 

Now consider the DvD loss function, as follows:
\begin{equation}
\label{eqn:dvd}
    J(\Theta_t) = \underbrace{\sum_{i=1}^M \mathbb{E}_{\tau\sim\pi_{\theta^i}} [R(\tau)]}_\text{individual rewards} \; + 
    \underbrace{%
     \vphantom{ \sum_{i=1}^M } 
     \lambda_t\mathrm{Div}(\Theta_t)}_{\text{population diversity}}
\end{equation}

where $\lambda_t \in (0,1)$ is the trade-off between reward and diversity. This fundamentally differs from Equation \ref{eqn:novelty_obj}, since we directly optimize for $\Theta_t$ rather than separately considering $\{\theta^i\}_{i=1}^M$. 

\begin{theorem}
\label{tabular_mdp}
Let $M$ be a finite, tabular MDP with $\tilde{M} \geq M$ distinct optimal policies $\{ \pi_i\}_{i=1}^{\tilde{M}}$ all achieving a cumulative reward of $\mathcal{R}$ and such that the reward value $\mathcal{R}(\pi)$ of any suboptimal policy $\pi$  satisfies $\mathcal{R}(\pi) + \Delta < \mathcal{R}$ for some $\Delta > 0$. There exists $\lambda_t > 0$, such that the objective in Equation \ref{eqn:dvd} can only be maximized if the population contains $M$ distinct optimal solutions.
\end{theorem}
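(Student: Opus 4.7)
The plan is to choose $\lambda_t$ small enough that a reward deficit of $\Delta$ cannot be compensated by diversity gains, yet strictly positive so that diversity breaks ties among all-optimal populations; a concrete choice I would make is $\lambda_t = \Delta/2$. First I would pin down the ranges of the two summands of $J$: the reward sum is bounded above by $M\mathcal{R}$, with equality iff every policy in the population is optimal, while the diversity term satisfies $0 \leq \det(\mathbf{K}) \leq 1$ by Hadamard's inequality for positive semidefinite matrices with unit diagonal. Crucially, $\det(\mathbf{K}) = 0$ whenever two agents share the same behavioral embedding (two identical rows of $\mathbf{K}$), and $\det(\mathbf{K}) > 0$ whenever all $M$ embeddings are pairwise distinct, because the SE kernel is strictly positive definite on distinct inputs.

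With these bounds in hand, I would partition candidate populations into three classes and compare their $J$ values. Class (i), consisting of $M$ distinct optimal policies, exists because $\tilde{M} \geq M$, and attains $J = M\mathcal{R} + \lambda_t d$ for some $d > 0$. Class (ii), all-optimal populations with at least one repeated behavioral embedding, gives $J = M\mathcal{R} + 0 = M\mathcal{R}$. Class (iii), populations containing at least one suboptimal policy, loses at least $\Delta$ in the reward sum and gains at most $\lambda_t = \Delta/2$ in the diversity term, yielding $J \leq M\mathcal{R} - \Delta/2$. Since any class (i) value strictly exceeds $M\mathcal{R}$, it beats both (ii) and (iii), so every maximizer of $J$ must lie in class (i), which is the desired conclusion.

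The main obstacle, and the place where I would be most careful, is establishing $\det(\mathbf{K}) > 0$ for class (i); this is exactly where the choice of kernel matters. It follows from strict positive-definiteness of the SE kernel together with the observation, made just after Definition \ref{Def:embedding}, that distinct policies in a finite deterministic MDP induce distinct behavioral embeddings. Without this separation, two behaviorally different optimal policies could collapse to the same $\phi$, class (ii) could absorb part of class (i), and the strict inequality needed to rule out classes (ii) and (iii) would break down. A minor auxiliary check is that class (iii)'s bound uses $\det(\mathbf{K}) \leq 1$ regardless of how the suboptimal policies are embedded, which is immediate from Hadamard's inequality.
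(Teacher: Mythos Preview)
Your proposal is correct and follows essentially the same strategy as the paper's proof: bound $\det(\mathbf{K})$ between $0$ and $1$, use the $\Delta$ gap to rule out any population containing a suboptimal policy once $\lambda_t < \Delta$, and then use $\lambda_t > 0$ together with $\det(\mathbf{K})=0$ for repeated embeddings to force distinctness among the remaining all-optimal populations. The only cosmetic differences are that the paper obtains the upper bound $\det(\mathbf{K})\le 1$ via AM--GM on the eigenvalues (equivalent to your Hadamard argument for a PSD matrix with unit diagonal) and leaves $\lambda_t$ as any value in $(0,\Delta)$ rather than fixing $\Delta/2$; you are also slightly more explicit than the paper in invoking strict positive definiteness of the SE kernel to guarantee $\det(\mathbf{K})>0$ for distinct embeddings.
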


The proof of Theorem \ref{tabular_mdp} is in the Appendix (Sec. \ref{sec:theory}), where we also show that for the case of the squared exponential kernel, the first order approximation to the determinant is in fact related to the mean pairwise L2-distance. However, for greater population sizes, this first order approximation is zero, implying the determinant comes from higher order terms. 

We have now shown theoretically that our formulation for diversity can recover diverse high performing solutions. This result shows the importance of utilizing the determinant to measure diversity rather than the pairwise distance.

\begin{figure}[h]
  \centering
	\includegraphics[keepaspectratio, width=0.75\textwidth]{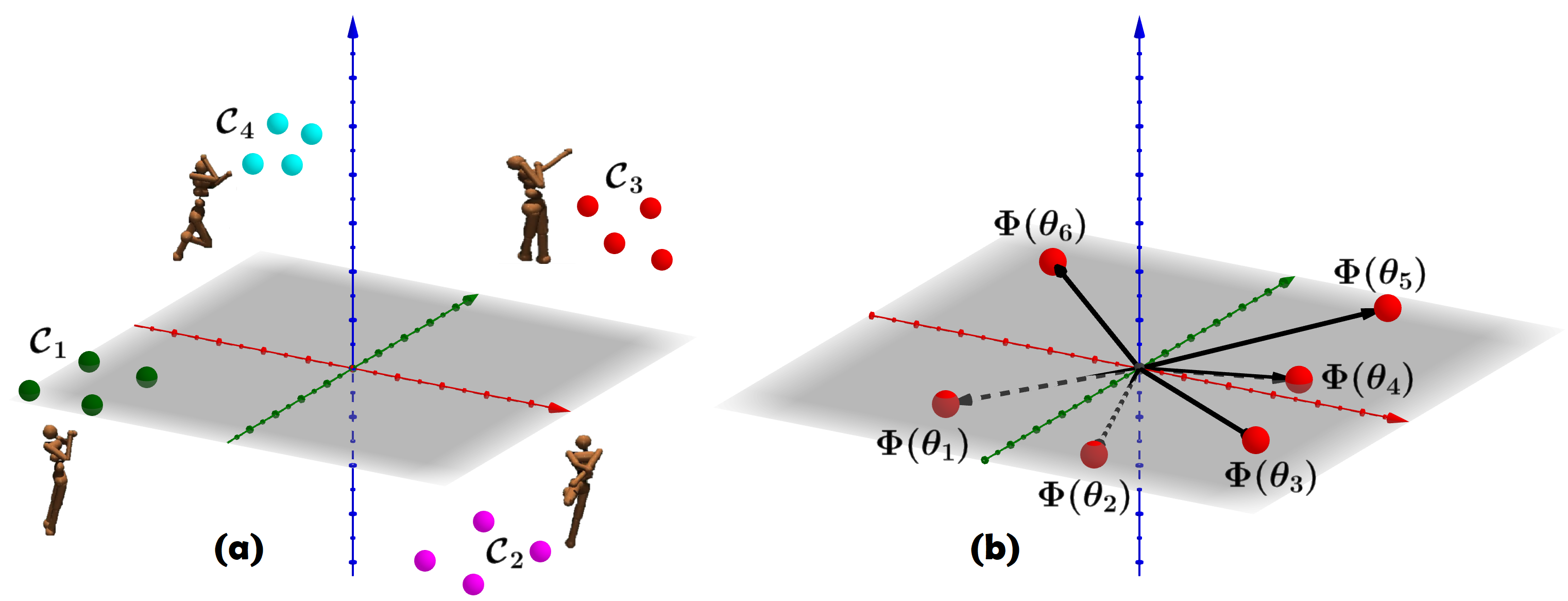}
  \caption{\small{Determinant vs. pairwise distance. (a): populations of agents split into four clusters with agents within cluster discovering similar policies. (b): embedded policies $\phi(\theta_{1}),...,\phi(\theta_6)$ lie in a grey hyperplane. In (a) resources within a cluster are wasted since agents discover very similar policies. In (b) all six embeddings can be described as linear combinations of embeddings of fewer canonical policies. In both settings the mean pairwise distance will be high but diversity as measured by determinants is low.}}
\vspace{-2mm}  
\label{fig:dependent_updates}  
\end{figure}

Using the determinant to quantify diversity prevents the undesirable clustering phenomenon, where a population evolves to a collection of conjugate classes. To illustrate this point, consider a simple scenario, where all $M$ agents are partitioned into $k$ clusters of size $\frac{M}{k}$ each for $k=o(M)$. By increasing the distance between the clusters one can easily make the novelty measured as an average distance between agents' embedded policies as large as desired, but that is not true for the corresponding determinants which will be zero if the similarity matrix is low rank. Furthermore, even if all pairwise distances are large, the determinants can be still close to zero if spaces where agents' high-dimensional policy embeddings live can be accurately approximated by much lower dimensional ones. Standard methods are too crude to measure novelty in such a way (see: Fig. \ref{fig:dependent_updates}). 

Next we provide a simple concrete example to demonstrate this phenomenon.

\subsection{An Illustrative Example: Tabular MDP}

\begin{wrapfigure}{r}{0.32\textwidth}
\vspace{-8mm}
        \centering 	\includegraphics[keepaspectratio, width=0.3\textwidth]{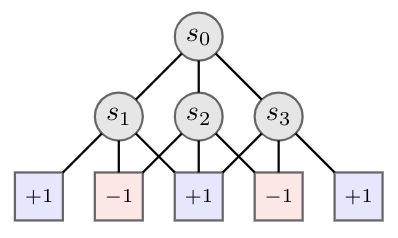}       
        \vspace{-4mm}
        \caption{A simple MDP.}
        \label{fig:simplemdp}
        \vspace{-2mm}
\end{wrapfigure}

Consider the simple MDP in Fig. \ref{fig:simplemdp}. There are four states $\mathcal{S} = \{s_0, s_1, s_2, s_3\}$, each has three actions, $\mathcal{A}=\{\text{-}1, 0, 1\}$ corresponding to $\mathrm{left}$, $\mathrm{down}$ and $\mathrm{right}$ respectively. In addition, there are five terminal states, three of which achieve the maximum reward ($+1$). Let $\phi^* = \{\phi(\theta^i)\}_{i=1}^5 = \{[\text{-}1,\text{-}1], [\text{-}1,1], [0,0], [1,\text{-}1], [1,1]\}$, be the set of optimal policies. If we have a population of five agents, each achieving a positive reward, the determinant of the $5\times5$ kernel matrix is only $>0$ if the population of agents is exactly $\phi^*$. This may seem trivial, but note the same is not true for the pairwise distance. If we let $d(\Theta) = \frac{1}{n}\sum_{i=1}^n \sum_{j>i} ||\phi(\theta^i) - \phi(\theta^j)||_2$, then $\phi^*$  does not maximize $d$. One such example which achieves a higher value would be $\phi' = \{[\text{-}1,\text{-}1], [\text{-}1,1], [1,\text{-}1], [1,1], [1,1]\}$. See the following link for a colab demonstration of this example: \url{https://bit.ly/2XAlirX}. 
\section{DvD Algorithm}
\label{sec:algorithm}

\subsection{Approximate Embeddings}

In most practical settings the state space is intractably or infinitely large. Therefore, we must sample the states $\{s_i\}_{i=1}^n$, where $n<N$, and compute the embedding as an expectation as follows:
\begin{equation}
    \widehat{\phi}(\theta_i) = \mathbb{E}_{s \sim \mathcal{S}}[\{\pi_{\theta^i}(.|s)\}]
\end{equation}
In our experiments we choose to randomly sample the states $s$, which corresponds to frequency weights. Other possibilities include selecting diverse ensembles of states via DPP-driven sampling or using probabilistic models to select less frequently visited states. We explore each of these options in the experiments where we show the representative power of this action-based embedding is not overly sensitive to these design choices (see: Fig. \ref{figure:numstates_ablation}). However, it remains a potential degree of freedom for future improvement in our method, potentially allowing a far smaller number of states to be used. 

\subsection{Adaptive Exploration}\label{section::adaptive_exploration}
Optimizing Equation \ref{eqn:dvd} relies on a user-specified degree of priority for each of the two objectives ($\lambda_t$). We formalize this problem through the lens of multi-armed bandits, and adaptively select $\lambda_t$ such that we encourage favoring the reward or diversity at different stages of optimization.

Specifically, we use Thompson Sampling \cite{thompson1933likelihood, gpucb, agrawal2012analysis, ts_tutorial, ts_russovanroy}. Let $\mathcal{K}=\{ 1, \cdots, K\}$ denote a set of arms available to the decision maker (learner) who is interested in maximizing its expected cumulative reward. The optimal strategy for the learner is to pull the arm with the largest mean reward. At the beginning of each round the learner produces a sample mean from its mean reward model for each arm, and pulls the arm from which it obtained the largest sample. After observing the selected arm's reward, it updates its mean reward model. 

Let $\pi_t^i$ be the learner's reward model for arm $i$ at time $t$. When $t = 0$ the learner initializes each of its mean reward models to prior distributions $\{\pi_0^{i} \}_{i=1}^K$. At any other time $t > 0$, the learner starts by sampling mean reward candidates $\mu_i \sim \pi_{t-1}^i$ and pulling the arm:
$
i_t = \arg\max_{i\in\mathcal{K}} \mu_i. 
$
After observing a true reward sample $r_t$ from arm $i_t$, the learner updates its posterior distribution for arm $i_t$. All the posterior distributions over arms $i\neq i_t$ remain unchanged. 

In this paper we make use of a Bernoulli model for the reward signal corresponding to the two arms $(\lambda = 0, \lambda = 0.5)$. At any time $t$, the chosen arm's sample reward is the indicator variable $r_t = \mathbf{1}\left(R_{t+1} > R_{t}     \right)  $ where $R_t$ denotes the reward observed at $\theta_t$ and $R_{t+1}$ that at $\theta_{t+1}$. We make a simplifying stationarity assumption and disregard the changing nature of the arms' means in the course of optimization. We use beta distributions to model both the priors and the posteriors of the arms' means. For a more detailed description of the specifics of our methodology please see the Appendix (Sec. \ref{ts}).

We believe this adaptive mechanism could also be used for count-based exploration methods or intrinsic rewards \cite{schmidhuber_fun}, and note very recent work using a similar approach to vary exploration in off-policy methods \cite{schaul2019adapting, agent57} and model-based RL \cite{rp1}. Combining these insights, we obtain the DvD algorithm. Next we describe two practical implementations of DvD.

\subsection{DvD-ES Algorithm}
\vspace{-1mm}
At each timestep, the set of policies $\Theta_t = \{\theta_t^i\}_{i=1}^M$ are simultaneously perturbed, with rewards computed locally and diversity computed globally. These two objectives are combined to produce a blackbox function with respect to $\Theta_t$.  At every iteration we sample $Mk$ Gaussian perturbation vectors $\{\mathbf{g}^{m}_{i}\}_{i=1,...,k}^{m=1,...,M}$.
We use two partitionings of this $Mk$-element subset that illustrate our dual objective - high local rewards and large global diversity. The first partitioning assigns to $m^{\mathrm{th}}$ worker a set $\{\mathbf{g}^{m}_{1},...,\mathbf{g}^{m}_{k}\}$.
These are the perturbations used by the worker to compute its local rewards. The second partitioning splits $\{\mathbf{g}^{m}_{i}\}_{i=1,...,k}^{m=1,...,M}$ into subsets: $\mathcal{D}_{i}=\{\mathbf{g}^{1}_{i},...,\mathbf{g}^{M}_{i}\}$. Instead of measuring the contribution of an individual $\mathbf{g}^{m}_{i}$ to the diversity, we measure the contribution of the entire $\mathcal{D}_{i}$. This motivates the following definition of diversity:
\begin{equation}
\mathrm{Div}_{t}(i) = \mathrm{Div}_t(\theta^{1}_{t}+\mathbf{g}^{1}_{i},...,\theta^{M}_{t}+\mathbf{g}^{M}_{i}).     
\end{equation}
Thus, the DvD-ES gradient update is the following:
\vspace{-1mm}
\begin{equation}
\label{eqn:ours}
    \theta^{m}_{t+1} = \theta^{m}_{t} + \frac{\eta}{k\sigma} \sum_{i=1}^{k} [(1-\lambda_t)R^{m}_{i} +\lambda_t \mathrm{Div}_{t}(i)]\mathbf{g}^{m}_i.
\end{equation}
where $\sigma>0$ is the smoothing parameter \cite{nesterov, ES}, $k$ is the number of ES-sensings, $\eta$ is the learning rate, and the embeddings are computed by sampling states from the most recent iteration.

\subsection{DvD-TD3 Algorithm}

It is also possible to compute analytic gradients for the diversity term in Equation \ref{eqn:dvd}. This means we can update policies with respect to the joint diversity using automatic differentiation. 

\begin{lemma}
\label{lem:gradlogdet}
The gradient of $\log\left(     \mathrm{det}(\mathbf{K})  \right)$ with respect to $\mathbf{\Theta} = \theta^1, \cdots, \theta^M$ equals:
$
    \nabla_{ \mathbf{  \theta } } \log\left(     \mathrm{det}(\mathbf{K})  \right) = - \left(\nabla_\theta \Phi(\theta) \right)    \left(\nabla_\Phi \mathbf{K} \right) \mathbf{K}^{-1},
$
where $\phi(\theta) = \phi(\theta^1) \cdots \phi(\theta^M)$.
\end{lemma}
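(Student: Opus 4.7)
The proof is a two-step chain rule computation, and the plan is to combine Jacobi's formula for the derivative of $\log\det$ with the factorization of the map $\theta\mapsto\Phi(\theta)\mapsto \mathbf{K}(\Phi)$.

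First I would recall Jacobi's formula in its logarithmic form: for any invertible matrix-valued function $\mathbf{K}$ of a scalar parameter, one has $\tfrac{d}{dt}\log\det\mathbf{K} = \mathrm{tr}(\mathbf{K}^{-1}\tfrac{d\mathbf{K}}{dt})$, which at the level of entries reads $\partial \log\det\mathbf{K}/\partial K_{ab} = (\mathbf{K}^{-1})_{ba}$. Since $\mathbf{K}$ is symmetric (the kernel is symmetric) and positive semidefinite, we may assume it is also invertible on the support where the determinant is nonzero, so $\mathbf{K}^{-\top}=\mathbf{K}^{-1}$. This gives the rightmost factor $\mathbf{K}^{-1}$ appearing in the claimed identity.

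Next I would decompose the Jacobian $\partial \mathbf{K}/\partial\theta^m$ using the chain rule through the behavioral embedding. Writing $\mathbf{K}_{ab}=K(\phi(\theta^a),\phi(\theta^b))$, only the rows/columns $a=m$ and $b=m$ of $\mathbf{K}$ depend on $\theta^m$, so $\partial K_{ab}/\partial\theta^m = \sum_c (\partial K_{ab}/\partial \phi(\theta^c))\,(\partial\phi(\theta^c)/\partial\theta^m)$, where the inner sum collapses because $\partial\phi(\theta^c)/\partial\theta^m=0$ unless $c=m$. Assembling these contributions across all $m$ yields the compact factorization $\nabla_\theta \mathbf{K} = (\nabla_\theta \Phi(\theta))(\nabla_\Phi \mathbf{K})$, with $\Phi=(\phi(\theta^1),\dots,\phi(\theta^M))$ as stated.

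Finally, composing the two pieces via the chain rule,
\[
\nabla_\theta \log\det\mathbf{K} \;=\; \bigl(\nabla_\theta\Phi(\theta)\bigr)\bigl(\nabla_\Phi\mathbf{K}\bigr)\,\mathbf{K}^{-1},
\]
which is the claimed formula (up to the stray minus sign in the statement, which I believe is a typographical slip since Jacobi's formula contributes no sign; I would flag this in the write-up and proceed without it).

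The only real obstacle is bookkeeping: the objects $\nabla_\theta\Phi$ and $\nabla_\Phi\mathbf{K}$ are higher-order tensors (a parameter-times-policy-times-action-dim block for the first, and an action-dim-times-$M\times M$ block for the second), and one must verify that the contractions in the chain rule precisely line up to produce a matrix of the same shape as $\theta$. A clean way to handle this is to prove the identity coordinate-wise for a single $\theta^m$ first — using symmetry of $\mathbf{K}$ to combine the row and column contributions into a factor of $2$ — and then re-package the block-diagonal structure of $\nabla_\theta\Phi$ into the compact matrix form above. No deeper ingredient (such as a spectral decomposition of $\mathbf{K}$ or properties of the SE kernel) is needed.
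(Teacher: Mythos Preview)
Your approach is essentially the same as the paper's: both proofs apply the chain rule to the composition $\theta\mapsto\Phi\mapsto\mathbf{K}\mapsto\log\det\mathbf{K}$, with the innermost derivative supplied by Jacobi's formula. You are also right to flag the minus sign: the paper's own proof starts from the claim $\nabla_{\mathbf{K}}\log\det\mathbf{K}=-\mathbf{K}^{-1}$ (citing a link about $\log\det$ of a matrix \emph{inverse}, which is where the sign actually comes from), so the sign error in the statement traces directly to that step rather than to anything you are missing.
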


The proof of this lemma is in the Appendix, Sec. \ref{sec:analytic}. Inspired by \cite{p3s}, we introduce DvD-TD3, using multiple policies to collect data for a shared replay buffer. This is done by dividing the total data collection by the number of policies. When optimizing the policies, we use an augmented loss function, as in Equation \ref{eqn:dvd}, and make use of the samples in the existing batch for the embedding. 
\vspace{-1mm}

\section{Related Work}
\label{sec:related}
Neuroevolution methods \cite{neuronature}, seek to maximize the reward of a policy through approaches strongly motivated by natural biological processes. They typically work by perturbing a policy, and either computing a gradient (as in Evolution Strategies) or selecting the top performing perturbations (as in Genetic Algorithms). The simplicity and scalability of these methods have led to their increased popularity in solving RL tasks \cite{ES, choromanski_orthogonal, rbo, novelty, deepga}. 

Neuroevolution methods often make use of behavioral representations \cite{novelty, evolvabilityes}. In \cite{novelty} it is proposed to use a population of agents, each of which would seek to jointly maximize the reward and difference/\textit{novelty} in comparison to other policies, quantified as the mean pairwise distance. Another approach, Evolvabilty ES \cite{evolvabilityes} seeks to learn policies which can quickly adapt to a new task, by maximizing the variance or entropy of behaviors generated by a small perturbation. The MAP-Elites \cite{Mouret2015IlluminatingSS} algorithm is conceptually similar to ours, the authors seek to find quality solutions in differing dimensions. However, these dimensions need to be pre-specified, whereas our method can be considered a learned version of this approach. To the best of our knowledge, only one recent neuroevolutionary approach \cite{genetic_actionembedding} uses the actions of a policy to represent behaviors, albeit in a genetic context over discrete actions. 


In the RL community there has recently been interest in unsupervised learning of diverse behaviors \cite{eysenbach2018diversity, hartikainen2020dynamical}. These methods are similar in principle to novelty search without a reward signal, but instead focus on diversity in behaviors defined by the states they visit. Another class of algorithms making use of behavioral representations \cite{maesn} focuses on meta learning in the behavioral space, however they require pre-training on similar tasks in order to learn a new one. A meta learning approach from \cite{leo} proposes using a latent generative representation of model parameters, which could be thought of as a behavioral embedding of the policy. Finally, \cite{svgd_diverse} propose a functional approach for learning diversified parameters. As an iterative method, it is still subject to the undesirable cycling phenomenon.

\section{Experiments}
\label{sec:experiments}
Here evaluate DvD-ES and DvD-TD3 in a variety of challenging settings. We focus first on the ES setting, since ES is cheap to run on CPUs \cite{ES} which allows us to run a series of ablation studies.

\subsection{Finding Diverse Solutions with DvD-ES}
We compare DvD-ES against vanilla ES, as well as NSR-ES from \cite{novelty}, both of which updates each population member sequentially. For both DvD and NSR-ES, we use the same embedding, with $20$ randomly selected states. We use this for all \textbf{all} DvD-ES experiments. We parameterize our policies with two hidden layer neural networks, with $\mathrm{tanh}$ activations (more details are in the Appendix, Section \ref{hyperparams}). All x-axes are presented in terms of iterations and comparisons are made fair by dividing the number of iterations for two sequential algorithms (ES and NSR-ES) by the population size. All experiments made use of the $\mathrm{ray}$ \cite{ray} library for parallel computing, with experiments run on a $32$-core machine. To run these experiments see the following repo: \url{https://github.com/jparkerholder/DvD_ES}.

\begin{wrapfigure}{r}{0.55\textwidth}
        \vspace{-6mm}
        \begin{minipage}{0.55\textwidth}
        \subfigure{\includegraphics[width=0.51\linewidth]{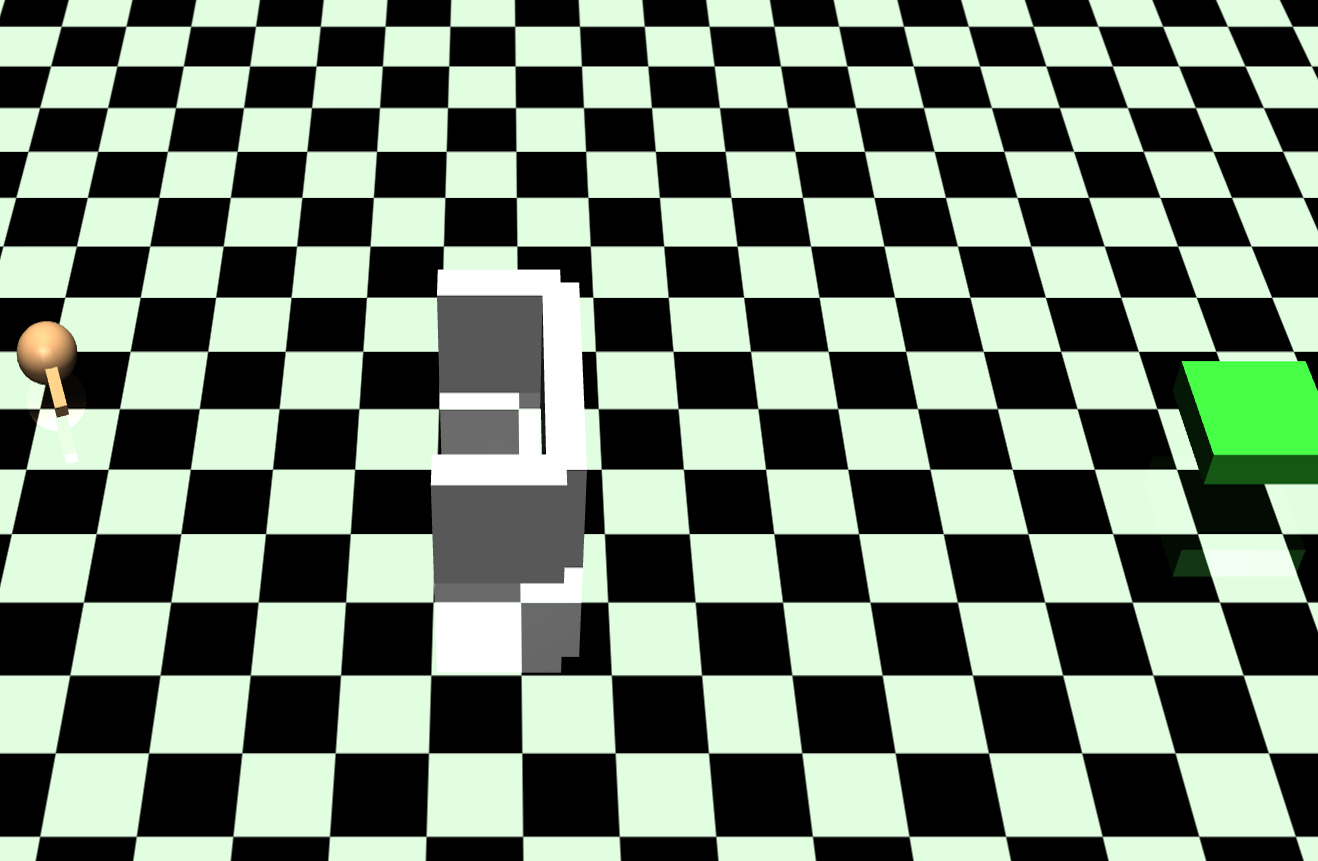}}    
        \subfigure{\includegraphics[width=0.48\linewidth]{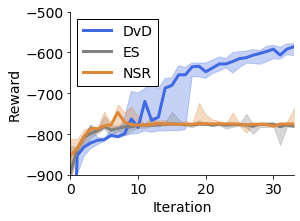}} 
        \vspace{-3mm}
        \caption{\small{\textbf{Left:} the Point-v0 environment. \textbf{Right:} median best performing agent across ten seeds. IQR shaded.}}
        \label{figure:max}
        \end{minipage}
        \vspace{-4mm}
\end{wrapfigure}
\vspace{-1mm}

\textbf{Exploration} We begin with a simple environment, whereby a two dimensional point agent is given a reward equal to the negative distance away from a goal. The agent is separated from its goal by the wall (see: Fig. \ref{figure:max}a)). A reward of -800 corresponds to moving directly towards the goal and hitting at the wall. If the agent goes around the wall it initially does worse, until it learns to arc towars the goal, which produces a reward greater than -600. We ran ten seeds,  with a population of size $M=5$. As we see, both vanilla ES and NSR-ES fail to get past the wall (a reward of -800), yet DvD is able to solve the environment for \textbf{all 10 seeds}. See the Appendix (Table \ref{table:unimodal}) for additional hyperparameters details.

\vspace{-1mm}
\paragraph{Multi-Modal Environments} A key attribute of a QD algorithm is the ability to learn a diverse set of high performing solutions. This is often demonstrated qualitatively, through videos of learned gaits, and thus hard to scientifically prove.
To test this we create environments where an agent can be rewarded for multiple different behaviors, as is typically done in multi-objective RL \cite{yang2019morl}. Concretely, our environments are based on the $\mathrm{Cheetah}$ and $\mathrm{Ant}$, where we assign rewards for \emph{both} Forward and Backward tasks, which commonly used in meta-RL \cite{maml, promp}. We can then evaluate the population of agents on both individual tasks, to quantitatively evaluate the diversity of the solutions.  

\begin{figure}[H]
    \begin{minipage}{0.99\textwidth}
    \subfigure{\includegraphics[width=.99\linewidth]{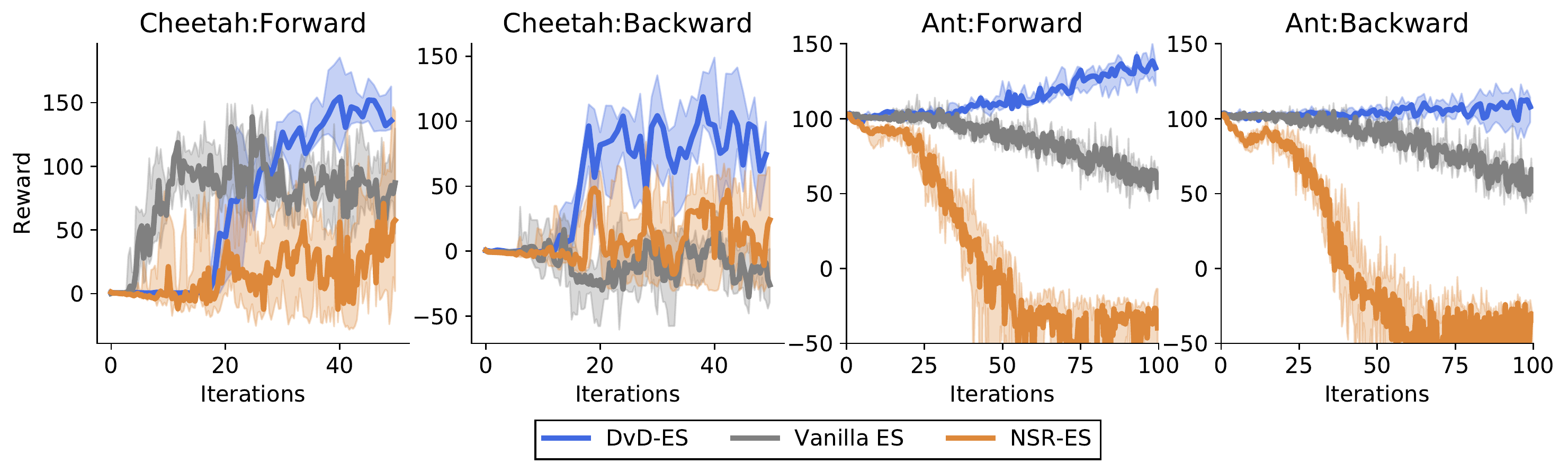}}    
    \vspace{-4mm}
    \caption{\small{The median best performing agent across ten seeds for multi-modal tasks and two environments: $\mathrm{Cheetah}$ and $\mathrm{Ant}$. The plots show median curves with IQR shaded.}}
    \label{figure:diverse}
    \end{minipage}
\end{figure}
\vspace{-3mm}

In both settings we used a population size of $M=3$, which is sufficient to learn both tasks. In Fig. \ref{figure:diverse} we see that DvD is able to learn both modes in both environments. For the $\mathrm{Cheetah}$, the Backward task appears simpler to learn, and with no diversity term vanilla ES learns this task quickly, but subsequently performs poorly in the Forward task (with inverse reward function). For $\mathrm{Ant}$, the noise from two separate tasks makes it impossible for vanilla ES to learn at all. 

\textbf{Single Mode Environments} Now we consider the problem of optimizing tasks which have only one optimal solution or at least, with a smaller distance between optimal solutions (in the behavioral space), and an informative reward function. In this case, overly promoting diversity may lead to worse performance on the task at hand, as seen in NSR-ES (\cite{novelty}, Fig. 1.(c)). We test this using four widely studied continuous control tasks from $\mathrm{OpenAI}$ $\mathrm{Gym}$. In all cases we use a population size of $M=5$, we provide additional experimental details in the Appendix (see Section \ref{hyperparams}).

\begin{figure}[H]
    \begin{minipage}{0.99\textwidth}
    \subfigure{\includegraphics[width=.99\linewidth]{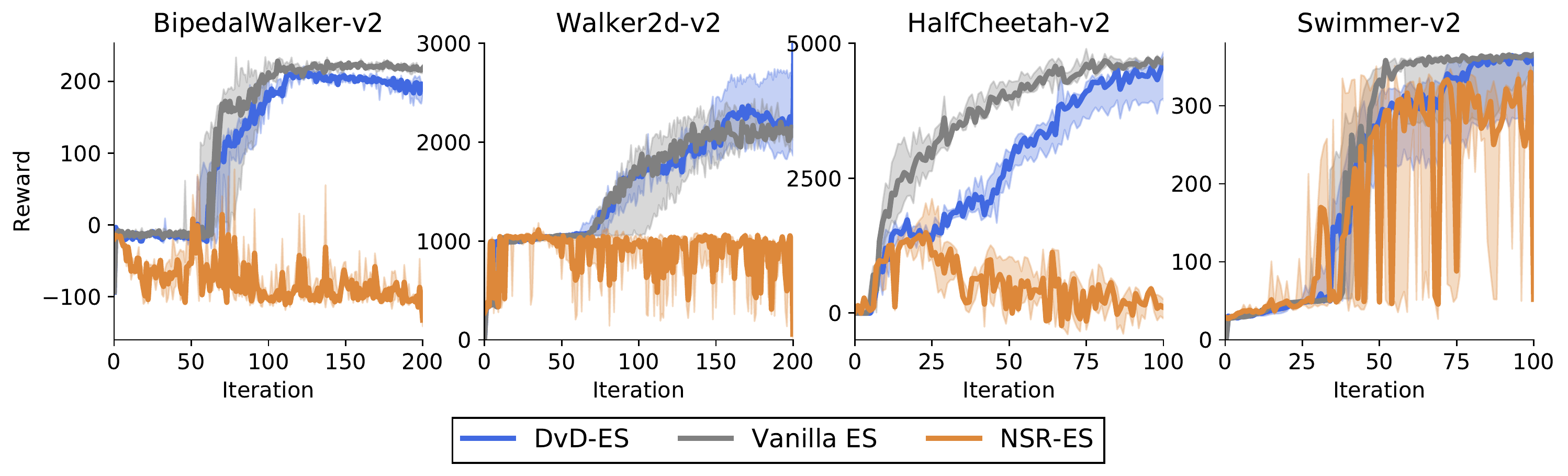}}    
    \vspace{-4mm}
    \caption{\small{The median best performing agent across five seeds for four different environments.}}
    \label{figure:gym}
    \end{minipage}
\end{figure}
\vspace{-4mm}

As we see in Fig. \ref{figure:gym}, in all four environments
NSR-ES fails to train (which is consistent with the vanilla $\mathrm{Humanoid}$ experiment from \cite{novelty}) in contrast to
DvD that shows only a minimal drop in performance vs. the vanilla ES method which is solely focusing on the reward function. This promising result implies that we gain the benefit of diversity without compromising on tasks, where it is not required. We note that DvD outperforms ES in $\mathrm{Walker2d}$, which is known to have a deceptive local optimum at $1000$ induced by the survival bonus \cite{horia}.

These experiments enable us also to demonstrate the cyclic behaviour that standard novelty search approaches suffer from (see: Section \ref{section::introduction}).  NSR-ES often initially performs well, before subsequently abandoning successful behaviors in the pursuit of novelty. Next we explore key design choices. 

\begin{wraptable}{r}{6.5cm}
\vspace{-7mm}
    \centering
    \caption{\small{This table shows the median maximum performing agent across $5$ seeds. All algorithms shown are identical aside from the choice of DPP kernel. Training is for $\{50,100,200\}$ iterations for $\{\mathrm{point}, \mathrm{Swimmer}\mathrm{Walker2d}\}$ respectively.}}
    \label{table:kernel_ablation}
    \scalebox{0.7}{
    \begin{tabular}{l*{4}{c}r}
    \toprule
    & \textbf{Point} & \textbf{Swimmer} & \textbf{Walker2d}  \\
    \midrule
    Squared Exponential & -547.03 & 354.86 & 1925.86 \\
    Exponential & -561.13 & 362.83 & 1929.81 \\
    Linear & -551.48 & 354.37 & 1944.95 \\
    Rational Quadratic & -548.55 & 246.68 & 2113.02 \\
    Matern $\frac{3}{2}$ & -578.05 & 349.52  & 1981.66  \\
    Matern $\frac{5}{2}$ & -557.69 & 357.88  & 1866.56   \\
    \bottomrule
\end{tabular}}
\vspace{-3mm}
\end{wraptable} 

\textbf{Choice of kernel} In our experiments, we used the Squared Exponential (or RBF) kernel. This decision was made due to its widespread use in the machine learning community and many desirable properties. However, in order to assess the quality of our method it is important to consider the sensitivity to the choice of kernel. In Table \ref{table:kernel_ablation} we show the median rewards achieved from $5$ seeds for a variety of kernels. As we see, in almost all cases the performance is strong, and similar to the Squared Exponential kernel used in the main experiments. 

\begin{figure}[H]
    \begin{minipage}{0.99\textwidth}
    \subfigure{\includegraphics[width=.99\linewidth]{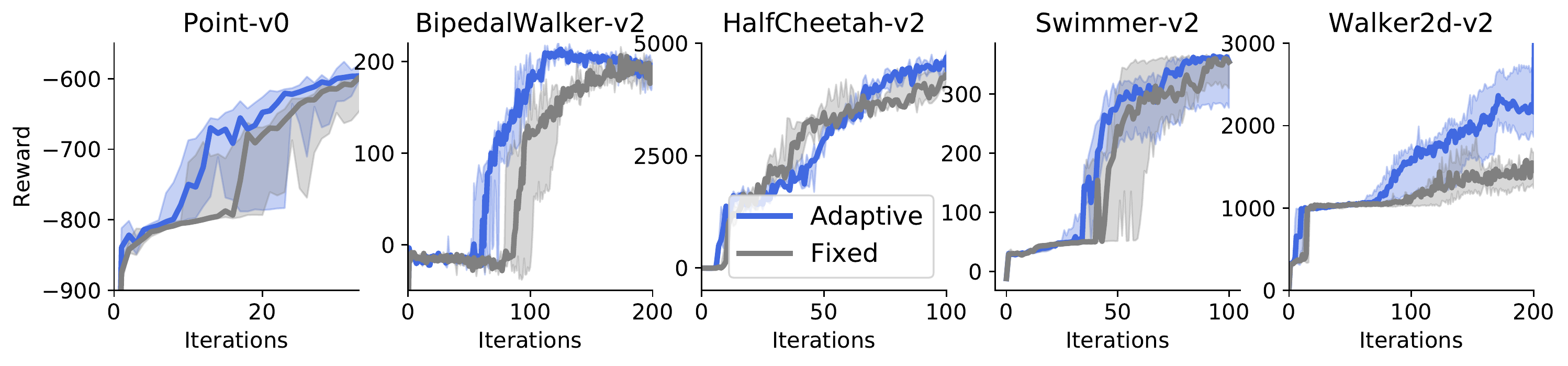}}    
    \vspace{-4mm}
    \caption{\small{The median best performing agent across five seeds for five different environments.}}
    \label{figure:adaptive_ablation}
    \end{minipage}
\end{figure}

\textbf{Do we need to adapt?} In Fig. \ref{figure:adaptive_ablation} we explored the effectiveness of the adaptive mechanism, by running five experiments with the DvD algorithm with fixed $\lambda=0.5$. While notably we still see strong performance from the joint diversity score (vs. NSR-ES in Fig \ref{figure:gym}), it is clear the adaptive mechanism boosts performance in all cases.  

\subsection{Teaching a Humanoid to Run with DvD-TD3}

We now evaluate DvD-TD3 on the challenging $\mathrm{Humanoid}$ environment from the Open AI Gym \cite{gym}. The $\mathrm{Humanoid}$-$\mathrm{v2}$ task requires a significant degree of exploration, as there is a well-known local optimum at $5000$, since the agent is provided with a survival bonus of $5$ per timestep \cite{horia}. 

We train DvD-TD3 with $M=5$ agents, where each agent has its own neural network policy, but a shared Q-function. We benchmark against both a single agent ($M=1$), which is vanilla TD3, and then what we call ensemble TD3 (E-TD3) where $M=5$ but there is no diversity term. We initialize all methods with $25,000$ random timesteps, where we set $\lambda_t=0$ for DvD-TD3. We train each for a total of one million timesteps, and repeat for $7$ seeds. The results are shown in Fig. \ref{figure:dvdtd3}.

\begin{figure}[H]
    \begin{minipage}{0.99\textwidth}
    \subfigure{\includegraphics[width=.99\linewidth]{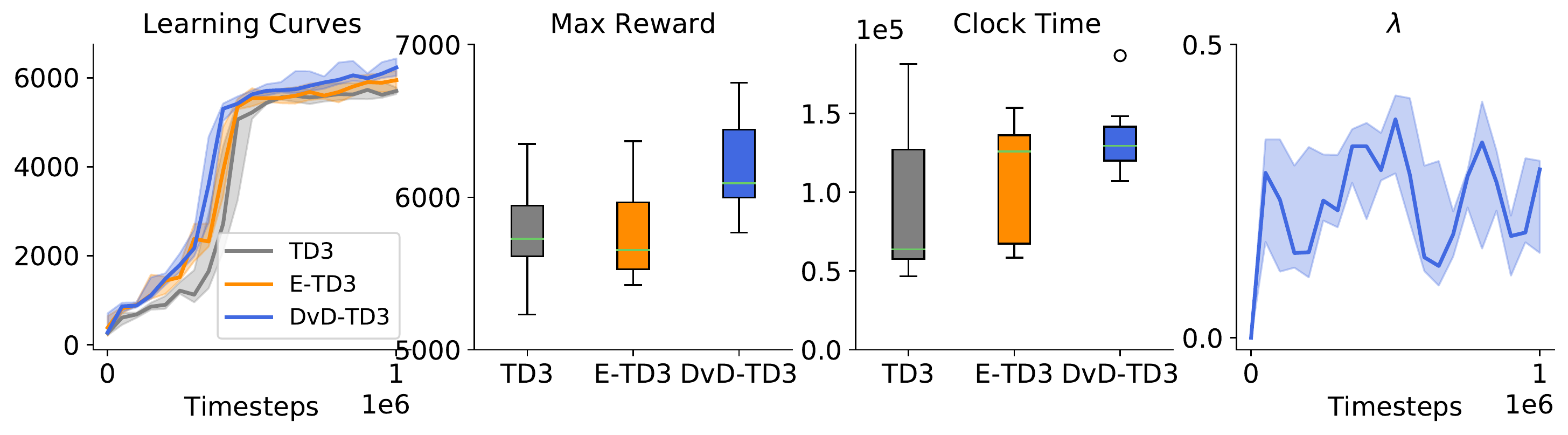}}    
    \caption{\small{From left to right: a) Median curves from $7$ seeds, IQR shaded. b) The distribution of maximum rewards achieved for each seed. c) The distribution of wall clock time to train for $10^6$ timesteps. d) The evolution of $\lambda_t$ during training.}}
    \label{figure:dvdtd3}
    \end{minipage}
\end{figure}
\vspace{-3mm}

As we see on the left two plots, DvD-TD3 achieves better sample efficiency, as well as stronger final performance. For comparison, the median best agents for each algorithm were: DvD-TD3: 6091, E-TD3: 5654 and TD3: 5727. This provides t-statistics of 2.35 and 2.29 for DvD-TD3 vs. E-TD3 and TD3, using Welch's unequal variances t-test. Both of these are statistically significant ($p<0.05$). As far as we are aware, DvD-TD3 is the first algorithm to get over $6000$ on $\mathrm{Humanoid}$-$\mathrm{v2}$ in one million timesteps. This means the agent has achieved forward locomotion, rather than simply standing still (at $5000$). Previous results for Soft Actor Critic (SAC) have only reached $6000$ at three million timesteps \cite{sac, oac}. We believe this provides strong support for using population-based approaches to explore with different behaviors in off-policy RL. 

We note that while we maintain full policy networks for each member of the population, it is likely possible to implement the population with shared initial layers but multiple heads (as in \cite{bootstrapped_dqn}). Space was not an issue in our case, but may be more important when using DvD for larger tasks (e.g. training from pixels). Furthermore, it is not a ``free lunch'', as the mean wall-clock time does tend to be higher for DvD (Fig. \ref{figure:dvdtd3}, second from right). However, this is often less important than sample efficiency in RL. Finally, the setting of $M$ is likely important. If too high, each agent would only explore for a brief period of time, or risk harming sample efficiency. If too small, we may not reap the benefit of using the population to explore. We believe investigating this is exciting future work.

\section{Conclusion and Future Work}

In this paper we introduced DvD, a novel method for promoting diversity in population-based methods for reinforcement learning. DvD addresses the issue of cycling by utilizing a joint population update via determinants corresponding to ensembles of policies' embeddings. Furthermore, DvD adapts the reward-diversity trade off in an online fashion, which facilitates flexible and effective diversity. We demonstrated across a variety of challenging tasks that DvD not only finds diverse, high quality solutions but also manages to maintain strong performances in one-good-solution settings.

We are particularly excited by the variety of research questions which naturally arise from this work. As we have mentioned, we believe it may be possible to learn the optimal population size, which is essentially the number of distinct high performing behaviors which exist for a given task. We are also excited by the prospect of learning embeddings or kernels, such that we can learn \emph{which dimensions} of the action space are of particular interest. This could possibly be tackled with latent variable models. 

\section*{Acknowledgements}

This research was made possible thanks to an award from the GCP research credits program. The authors want to thank anonymous reviewers for constructive feedback which helped improve the paper. 

\section*{Broader Impact}
\label{sec:broader_impact}

We believe our approach could be relevant for a multitude of settings, from population-based methods \cite{PBT} to ensembles of models \cite{bootstrapped_dqn, ME-TRPO}. There are two benefits to increased diversity: 
\begin{enumerate}
    \item The biggest short term advantage is performance gains for existing methods. We see improved sample efficiency and asymptotic gains from our approach to diversity, which may benefit any application with a population of agents \cite{PBT} or ensemble of models. This may be used for a multitude of reasons, and one would hope the majority would be positive, such as model-based reinforcement learning for protein folding \cite{protein} (with an ensemble). 
    \item Having more diverse members of an ensemble may improve generalization, since it reduces the chance of models overfitting to the same features. This may improve robustness, helping real-world applications of reinforcement learning (RL). It may also lead to fairer algorithms, since a diverse ensemble may learn to make predictions based on a broader range of characteristics.
\end{enumerate}

For reinforcement learning specifically, we believe our behavioral embeddings can be used as the new standard for novelty search methods. We have shown these representations are robust to design choices, and can work across a variety of tasks without domain knowledge. This counteracts a key weakness of existing novelty search methods, as noted by \cite{whitesonBC}. In addition, we are excited by future work building upon this, potentially by learning embeddings (as in \cite{gaier2020automating}).

\bibliographystyle{abbrv}
\bibliography{blackbox_papers}

\newpage
\onecolumn

\section*{Appendix: Effective Diversity in Population Based Reinforcement Learning}

\section{Additional Experiment Details}

\subsection{Ablation Studies}
\label{sec:ablations}

Here we seek to analyze the sensitivity of DvD to design choices made, in order to gain confidence surrounding the robustness of our method.

\textbf{How sensitive is the embedding to the choice of states?} One of the crucial elements of this work is task-agnostic behavioral embedding. Similar to what has been used in all trust-region based policy gradient algorithms \cite{schulman2015trust, schulman2017proximal}, we use a concatenation of actions to represent policy behavior. In all experiments we used this behavioral embedding for both DvD and NSR, thus rendering the only difference between the two methods to be adaptive vs. fixed diversity and joint vs. individual updates. 

\vspace{-1mm}
However, there is still a question whether the design choices we made had an impact on performance. As such, we conducted
ablation studies over the number $n$ of states chosen and different strategies for choosing states (for $n=20$) used to construct behavioral embeddings.

\vspace{-6mm}
\begin{figure}[H]
    \begin{minipage}{0.99\textwidth}
    \subfigure[\textbf{Point}]{\includegraphics[width=.25\linewidth]{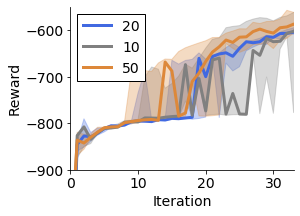}}    
    \subfigure[\textbf{Swimmer}]{\includegraphics[width=.23\linewidth]{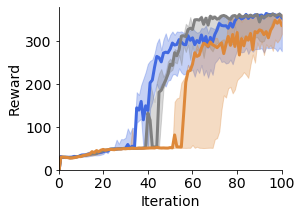}}  
    \subfigure[\textbf{Point}]{\includegraphics[width=.25\linewidth]{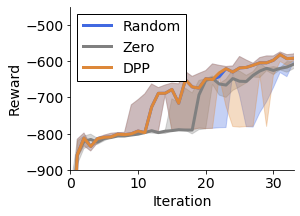}}    
    \subfigure[\textbf{Swimmer}]{\includegraphics[width=.23\linewidth]{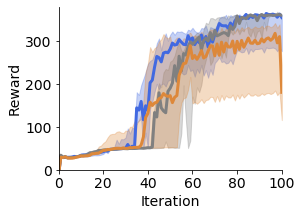}}  
    \vspace{-3mm}
    \caption{\small{The median best performing agent across five seeds. In a) and b) we vary the number of states selected in the random sample, in c) and d) we select 20 states but using different mechanisms. $\mathrm{Random}$ corresponds to uniform sampling, $\mathrm{Zero}$ to a zero function trained on all seen states, where we select the maximum variance states. DPP stands for a k-DPP \cite{kdpp}.}}
    \label{figure:numstates_ablation}
    \end{minipage}
\end{figure}
\vspace{-5mm}

As we see in Fig. \ref{figure:numstates_ablation}, both the number of states chosen and the mechanism for choosing them appear to have minimal impact on the performance. In all cases the point escapes the local maximum (of moving into the area surrounded by walls) and Swimmer reaches high rewards ($>300$).

\textbf{Choice of kernel} In Fig. \ref{figure:kernel_ablation} we include the plots accompanying Table \ref{table:kernel_ablation}. In almost all cases the performance is strong, and similar to the Squared Exponential kernel used in the main experiments. 

\vspace{-3mm}
\begin{figure}[H]
    \centering
    \begin{minipage}{0.99\textwidth}
    \subfigure{\includegraphics[width=.2\linewidth]{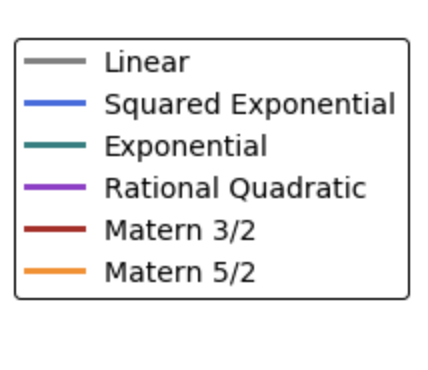}}    
    \subfigure[\textbf{Point}]{\includegraphics[width=.25\linewidth]{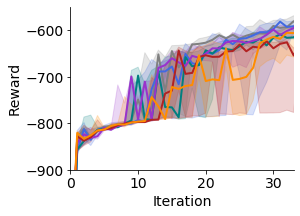}}    
    \subfigure[\textbf{Swimmer}]{\includegraphics[width=.23\linewidth]{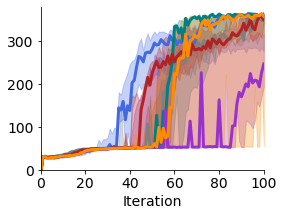}} 
    \subfigure[\textbf{Walker2d}]{\includegraphics[width=.23\linewidth]{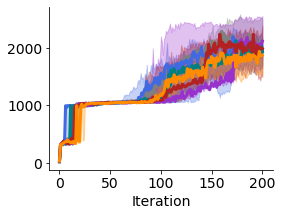}} 
    \vspace{-3mm}
    \caption{\small{In this figure we show the median maximum performing agent across five seeds. The only difference between the two curves is the choice of kernel for the behavioral similarity matrix.}}
    \label{figure:kernel_ablation}
    \end{minipage}
\end{figure}
\vspace{-4mm}

\textbf{What if the population size is much larger than the number of modes?} We also studied the relation between population size and the number of modes learned by the population. Both the tasks considered here have two modes, and intuitively a population of size $M=3$ should be capable of learning both of them. However, we also considered the case for $M=5$ (Fig. \ref{figure:cheetah_ablation}), and found that in fact a larger population was harmful for the performance. This leads to the interesting future work of adapting not only the degree of diversity but the size of the population. 
\vspace{-5mm}

\begin{figure}[H]
    \begin{minipage}{0.99\textwidth}
    \centering\subfigure[\textbf{Cheetah: Forward}]{\includegraphics[width=.25\linewidth]{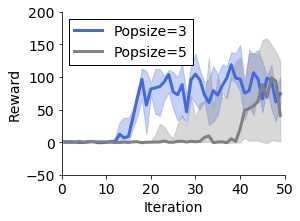}}    
    \centering\subfigure[\textbf{Cheetah: Backward}]{\includegraphics[width=.25\linewidth]{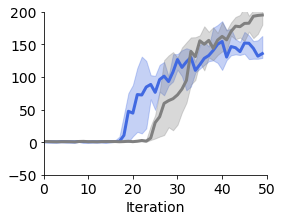}} 
    \vspace{-3mm}
    \caption{\small{Ablation study for the $\mathrm{DvD}$ algorithm: population size. The median best performing agent across ten seeds for the $\mathrm{Cheetah}$ multi-task environment. The only difference is the population sizes of $M=3$ and $M=5$.}}
    \label{figure:cheetah_ablation}
    \end{minipage}
\end{figure}

\subsection{Hyperparameters}
\label{hyperparams}

\textbf{DvD-ES} In Table \ref{table:embedding} we show the method for generating behavioral embeddings used for NSR-ES and our method. We used these settings across \textbf{all} experiments, and did not tune them. Number of states corresponds to the number of states used for the embedding, which is the concatenation of the actions from a policy on those states. State selection refers to the mechanism for selecting the states from the buffer of all states seen on the previous iteration. The update frequency is how frequently we re-sample states to use for the embeddings. We do not want this to be too frequent or we will have differnet embeddings every iteration simply as a result of the changing states.

\begin{table}[h]
    \centering
    \caption{\small{Configuration for the behavioral embedding, used across all experiments for our method and NSR-ES.}}
    \scalebox{0.85}{\begin{tabular}{l*{2}{c}r}
    \toprule
     & Value  \\
    \midrule
    Number of states & 20  \\
    State selection & random  \\
    Update frequency & 20  \\
    \bottomrule
\end{tabular}}
\label{table:embedding}
\end{table}

In Table \ref{table:multimodal} we show the  hyprparameters used for the multi-modal experiments. We note the main difference between the two environments is the $\mathrm{Ant}$ requires more ES sensings (300 vs. 100) and a larger neural network (64-64 vs. 32-32) than the $\mathrm{Cheetah}$. These settings were used for all three algorithms studied. 

\begin{table}[h]
    \centering
    \caption{\small{Parameter configurations for the multi-modal experiments.}}
    \scalebox{0.85}{\begin{tabular}{l*{3}{c}r}
    \toprule
    & \textbf{Cheetah} & \textbf{Ant}  \\
    \midrule
    $\sigma$ & 0.001 & 0.001 \\
    $\eta$ & 0.001 & 0.001 \\
    $h$ & 32 & 64 \\
    ES-sensings & 100 & 300 \\
    State Filter & True & True \\
    \bottomrule
\end{tabular}}
\label{table:multimodal}
\end{table}

In Table \ref{table:unimodal} we show the hyperparameters used for the uni-model and deceptive reward experiments. The main difference is the size of the neural network for point and Swimmer is smaller (16-16 vs. 32-32) since these environments have smaller state and action dimensions than the others. In addition, we note the horizon $H$ for the point is smaller, as $50$ timesteps is sufficient to reach the goal. These settings were used for all three algorithms considered. 

\begin{table}[h]
    \centering
    \caption{\small{Parameter configurations for the single mode experiments. The only difference across all tasks was a smaller neural network used for Swimmer and point, since they have a smaller state and action dimensionality.}}
    \scalebox{0.85}{\begin{tabular}{l*{6}{c}r}
    \toprule
    & \textbf{point} & \textbf{Swimmer} & \textbf{HalfCheetah} & \textbf{Walker2d} & \textbf{BipedalWalker}  \\
    \midrule
    $\sigma$  & 0.1 & 0.1 & 0.1 & 0.1 & 0.1 \\
    $\eta$ & 0.05 & 0.05 & 0.05 & 0.05 & 0.05 \\
    $h$ & 16 & 16 & 32 & 32 & 32 \\
    ES-sensings & 100 & 100 & 100 & 100 & 100 \\
    State Filter & True & True & True  & True  & True \\
    $H$ & 50 &  1000 & 1000  & 1000  & 1600 \\
    \bottomrule
\end{tabular}}
\label{table:unimodal}
\end{table}

\textbf{DvD-TD3} Our TD3 implementation comes from the following open source repository: \url{https://github.com/fiorenza2/TD3-PyTorch}.  

All parameters are the same as in the original TD3 paper \cite{td3}, apart from neural network architectures and choice of learning rate and batch size, which mirror SAC \cite{sac}.

\section{Theoretical Results}
\label{sec:theory}

\subsection{Proof of Theorem \ref{tabular_mdp}}

\begin{proof}
We start by recalling that $\mathrm{Div}(\Theta) = \mathrm{det}(\mathbf{K})$. Let $\alpha_1, \cdots, \alpha_M$ be the eigenvalues of $\mathbf{K}$. Since $\mathbf{K}$ is PSD, $\alpha_i \geq 0$ for all $i$. The following bounds hold:

\begin{equation*}
0 \stackrel{(i)}{\leq} \mathrm{Div}(\Theta) = \prod_{i=1}^M \alpha_i \stackrel{(ii)}{\leq} \left( \frac{\sum_{i=1}^M \alpha_i}{M}\right)^{M} = \left( \frac{\mathrm{trace}(\mathbf{K})}{M}\right)^{M} \stackrel{(iii)}{=} 1
\end{equation*}
Inequality $(i)$ follows since $\mathbf{K}$ is a PSD matrix. Inequality $(ii)$ is a consequence of the AM-GM inequality and Equality $(iii)$ follows because all the diagonal entries of $\mathbf{K}$ equal $1$.
Let $\{\tilde{\pi}_i\}_{i=1}^M$ be a set of policies with at least one suboptimal policy and parametrized by $\tilde{\Theta}_t$. Wlog let $\mathcal{R}(\tilde{\pi}_1) + \Delta < \mathcal{R}$. The following holds:
\begin{equation*}
 \sum_{i=1}^M    \mathcal{R}(\tilde{\pi}_i)  + \lambda_t  \mathrm{Div}(\tilde{\Theta}_t) \leq M\mathcal{R} - \Delta + \lambda_t 
\end{equation*}
Now observe that for any set of optimal policies $\{ \pi_i\}_{i=1}^M$ parametrised by $\Theta_t$ the objective in Equation \ref{eqn:dvd} satisfies:
\begin{equation*}
    \sum_{i=1}^M \mathcal{R}(\pi_i) + \lambda_t \mathrm{Div}(\Theta_t)  \geq M\mathcal{R}
\end{equation*}
Therefore if $\lambda_t < \Delta$, then:
\begin{equation*}
 \sum_{i=1}^M    \mathcal{R}(\tilde{\pi}_i)  + \lambda_t  \mathrm{Div}(\tilde{\Theta}_t) <   \sum_{i=1}^M \mathcal{R}(\pi_i) + \lambda_t \mathrm{Div}(\Theta_t)
\end{equation*}
Thus we conclude the objective in Equation \ref{eqn:dvd} can only be maximised when all policies parametrised by $\Theta_t$ are optimal. Since $\lambda_t > 0$ and $\mathrm{Div}(\Theta_t)$ is nonzero only when $\{ \pi_i \}_{i=1}^M$ are distinct, and there exist at least $M$ distinct solutions, we conclude that whenever $0 <\lambda_t < \Delta$, the maximizer for Equation \ref{eqn:dvd} corresponds to $M$ distinct optimal policies.

\end{proof}

\subsection{Proof of Lemma \ref{tabular_mdp}}
\label{sec:analytic}

In the case of deterministic behavioral embeddings, it is possible to compute gradients through the whole objective. Notice that $\mathrm{det}(\mathbf{K})$ is then differentiable as a function of the policy parameters. In the case of trajectory based embeddings, differentiating through the determinant is not that simple. Actually it may make sense in this case to use a $\log(\mathrm{det}(\mathbf{K}))$ score instead. This is because of the following lemma:

\begin{lemma}
The gradient of $\log\left(     \mathrm{det}(\mathbf{K})  \right)$ with respect to $\mathbf{\theta} = \theta^1, \cdots, \theta^M$ equals:
\begin{equation*}
    \nabla_{ \mathbf{  \theta } } \log\left(     \mathrm{det}(\mathbf{K})  \right) = - \left(\nabla_\theta \Phi(\theta) \right)    \left(\nabla_\Phi \mathbf{K} \right) \mathbf{K}^{-1}
\end{equation*}
Where $\Phi(\theta) = \Phi(\theta^1) \cdots \Phi(\theta^M)$
\end{lemma}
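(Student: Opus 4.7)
The plan is to prove the lemma in two clean steps: first reduce $\nabla_\theta \log\det(\mathbf{K})$ to a trace-type expression via Jacobi's formula, then unfold the dependence of $\mathbf{K}$ on $\theta$ through the behavioral embedding $\Phi$ using the chain rule.

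First I would invoke Jacobi's formula, which states that for any differentiable matrix-valued function $\mathbf{K}$ of a scalar parameter $t$,
$$\frac{d}{dt}\det(\mathbf{K}) \;=\; \det(\mathbf{K})\,\mathrm{tr}\!\left(\mathbf{K}^{-1}\frac{d\mathbf{K}}{dt}\right),$$
so that, dividing through by $\det(\mathbf{K})$ (which is strictly positive on the set where $\mathbf{K}$ is PD, as established just after Definition \ref{Def:dpp}),
$$\frac{d\log\det(\mathbf{K})}{dt} \;=\; \mathrm{tr}\!\left(\mathbf{K}^{-1}\frac{d\mathbf{K}}{dt}\right).$$
Applied coordinate-wise to $\theta$, this already reduces the problem to understanding how the entries of $\mathbf{K}$ move with $\theta$.

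Second, I would expand that derivative through the behavioral embedding. Since $\mathbf{K}_{ij} = K(\Phi(\theta^i),\Phi(\theta^j))$ only depends on $\theta^m$ through the two coordinates $\Phi(\theta^m)$ appearing in row $m$ and column $m$, the chain rule yields
$$\frac{\partial \mathbf{K}_{ij}}{\partial \theta^m} \;=\; \sum_{l} \frac{\partial \mathbf{K}_{ij}}{\partial \Phi_l(\theta^m)}\,\frac{\partial \Phi_l(\theta^m)}{\partial \theta^m}.$$
Substituting into the trace expression from step one and stacking over $m=1,\dots,M$ rewrites the result as the matrix product $(\nabla_\theta \Phi(\theta))(\nabla_\Phi \mathbf{K})\mathbf{K}^{-1}$ (up to the sign convention chosen in the statement), which is the announced identity.

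The main obstacle I expect is purely notational rather than conceptual: $\Phi$ is a map $\mathbb{R}^d \to \mathbb{R}^l$, $\mathbf{K}$ is an $M\times M$ matrix whose entries each depend on $2M$ vectors in $\mathbb{R}^l$, and the final object is a gradient with respect to $M$ parameter vectors in $\mathbb{R}^d$. Writing the answer compactly as $(\nabla_\theta \Phi)(\nabla_\Phi \mathbf{K})\mathbf{K}^{-1}$ requires adopting a fixed convention for how the three-index tensor $\nabla_\Phi \mathbf{K}$ is contracted against $\mathbf{K}^{-1}$ (essentially the $\mathrm{tr}(\mathbf{K}^{-1}\,\cdot)$ from Jacobi's formula) and then pulled back through the Jacobian $\nabla_\theta \Phi$. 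Once a consistent index convention is fixed, the identity drops out as a direct bookkeeping computation, with no further analytic content beyond Jacobi's formula and the ordinary chain rule.
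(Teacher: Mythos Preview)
Your proposal is correct and follows essentially the same route as the paper: invoke the standard identity for the gradient of $\log\det(\mathbf{K})$ with respect to the entries of $\mathbf{K}$ (the paper simply cites this as a known result, while you derive it via Jacobi's formula), and then propagate through the embedding $\Phi$ by the chain rule. Your discussion of the tensor-contraction bookkeeping is more explicit than the paper's, which just writes the three-factor chain rule in one line, and your remark ``up to the sign convention chosen in the statement'' is apt since the textbook identity is $\nabla_{\mathbf{K}}\log\det(\mathbf{K}) = \mathbf{K}^{-1}$ (for symmetric $\mathbf{K}$) rather than $-\mathbf{K}^{-1}$.
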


\begin{proof}

We start with:

\begin{equation*}
    \nabla_{\mathbf{K}} \log\left(     \mathrm{det}(\mathbf{K})  \right) = -\mathbf{K}^{-1}
\end{equation*}

This is a known result\footnote{see for example \url{https://math.stackexchange.com/questions/38701/how-to-calculate-the-gradient-of-log-det-matrix-inverse}}.

Consequently, 

\begin{align*}
\nabla_\theta \log\left(     \text{det}(\mathbf{K})  \right)  &=  \left(\nabla_\theta \Phi(\theta) \right)    \left(\nabla_\Phi \mathbf{K} \right)\left(\nabla_{\mathbf{K}} \log\left(     \text{det}(\mathbf{K})  \right)\right)\\ 
&= - \left(\nabla_\theta \Phi(\theta) \right)    \left(\nabla_\Phi \mathbf{K} \right) \mathbf{K}^{-1}
\end{align*}

Each of the other gradients can be computed exactly. 

\end{proof}

\subsection{Determinants vs. Distances}
\label{determinant_maths}

In this section we consider the following question. Let $k(\mathbf{x}, \mathbf{y} ) =\exp(-\| x-y \|^2)$. And $\mathbf{K} \in \mathbb{R}^{M \times M}$ be the kernel matrix corresponding to $M$ agents and resulting of computing the Kernel dot products between their corresponding embeddings $\{ \mathbf{x}_1, \cdots, \mathbf{x}_M\}$:

\begin{theorem}\label{theorem::first_order_approximation}
For $M\leq3$, the first order approximation of $\mathrm{det}(\mathbf{K})$ is proportional to the sum of the pairwise distances between $\{\mathbf{x}_1, \cdots, \mathbf{x}_M\}$. For $M > 3 $ this first order approximation equals $0$.
\end{theorem}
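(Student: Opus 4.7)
My plan is to Taylor-expand each off-diagonal entry about zero distance and use the Leibniz formula for the determinant to isolate the leading constant and linear contributions. Write $d_{ij} := \|\mathbf{x}_i - \mathbf{x}_j\|^2$, so $\mathbf{K}_{ii}=1$ and $\mathbf{K}_{ij} = 1 - d_{ij} + O(d_{ij}^2)$. Since the diagonal entries equal $1$, the Leibniz expansion collapses to
\begin{equation*}
\det(\mathbf{K}) \;=\; \sum_{\sigma\in S_M} \operatorname{sgn}(\sigma)\!\!\prod_{i:\sigma(i)\neq i}\!\! \mathbf{K}_{i\sigma(i)} \;=\; \sum_{\sigma} \operatorname{sgn}(\sigma)\Bigl(1 - \!\!\sum_{i:\sigma(i)\neq i}\!\! d_{i\sigma(i)}\Bigr) + O(d^2).
\end{equation*}
The constant part is $\sum_{\sigma\in S_M} \operatorname{sgn}(\sigma) = 0$ for $M\geq 2$, so the first-order contribution is the linear term, which I rewrite by exchanging the two sums as $-\sum_{i\neq j} d_{ij}\, N(i,j)$ with $N(i,j) := \sum_{\sigma:\,\sigma(i)=j} \operatorname{sgn}(\sigma)$.

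The crux is the vanishing of $N(i,j)$ for large $M$. When $M\geq 3$, the set $[M]\setminus\{i\}$ contains at least two elements, so I can pick a transposition $t=(k\,\ell)$ with $k,\ell\in [M]\setminus\{i\}$. The map $\sigma\mapsto \sigma\circ t$ is then a sign-reversing involution on $\{\sigma\in S_M:\sigma(i)=j\}$: it preserves $\sigma(i)=j$ because $t$ fixes $i$, and it flips $\operatorname{sgn}(\sigma)$. Pairing elements under this involution forces $N(i,j)=0$ for all $M\geq 3$, hence the linear-in-$d$ contribution to $\det(\mathbf{K})$ vanishes for every $M > 3$. This delivers the second half of the theorem with essentially no further work.

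For small $M$ I verify the proportionality by direct computation. For $M=2$: $\det(\mathbf{K})=1-\mathbf{K}_{12}^2 = 2 d_{12} + O(d_{12}^2)$, which is twice the (single) pairwise distance. For $M=3$: I expand $\det(\mathbf{K}) = 1 - \mathbf{K}_{12}^2 - \mathbf{K}_{13}^2 - \mathbf{K}_{23}^2 + 2\mathbf{K}_{12}\mathbf{K}_{13}\mathbf{K}_{23}$, substitute $\mathbf{K}_{ij}=1-d_{ij}+O(d_{ij}^2)$, and identify the lowest non-vanishing homogeneous order, which I then cast as a symmetric polynomial in $\{d_{12},d_{13},d_{23}\}$ matched to the claimed proportionality. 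The main obstacle I anticipate is precisely this $M=3$ bookkeeping: the combinatorial argument above already forces the strictly linear contribution to vanish, so the notion of ``first-order approximation'' for $M=3$ must be interpreted as the leading non-zero order of the expansion, which is identified by the explicit closed-form $3\times 3$ computation. All remaining ingredients are routine; the essential structural content for $M > 3$ is the sign-reversing involution $\sigma\mapsto \sigma\circ t$ described above.
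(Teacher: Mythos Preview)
Your treatment of the case $M>3$ is correct and in fact cleaner than the paper's. Both arguments start from the Leibniz expansion and reduce to showing that the signed count $N(i,j)=\sum_{\sigma:\sigma(i)=j}\operatorname{sgn}(\sigma)$ vanishes, but the paper does this via a fairly lengthy cycle--type case analysis (splitting according to whether the cycle through $1$ contains $1\to 2$, $1\leftarrow 2$, or the transposition $(1\,2)$, and then further according to cycle length), while your single sign-reversing involution $\sigma\mapsto\sigma\circ t$ with $t=(k\,\ell)$, $k,\ell\notin\{i\}$, settles it in one line. A side benefit: your involution only needs $|[M]\setminus\{i\}|\geq 2$, i.e.\ $M\geq 3$, whereas the paper's argument genuinely uses $M\geq 4$ (for the $\overleftrightarrow{B}_{1,2}$ piece it needs $M-2\geq 2$).

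That side benefit is exactly what creates the tension you flagged at $M=3$: your argument already forces the linear-in-$d$ contribution to vanish there, in apparent conflict with the theorem. You are right and the paper is wrong. The paper's explicit $3\times 3$ computation contains an arithmetic slip when linearising the last term: $2\exp(-S/(2l))\approx 2-S/l$, not $2-S/(2l)$, and with the correct coefficient the constant and linear parts both cancel exactly, giving $0$. So the ``first-order approximation'' of $\det(\mathbf K)$ vanishes for all $M\geq 3$, not just $M>3$; only $M=2$ gives a nonzero linear term.

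Your proposed rescue for $M=3$---reinterpret ``first order'' as the leading nonvanishing order and then match it to the sum of pairwise distances---does not work either. Carrying the expansion to second order (with $K_{ij}=e^{-d_{ij}}$, say) yields
\[
\det(\mathbf K)\;=\;2\bigl(d_{12}d_{13}+d_{12}d_{23}+d_{13}d_{23}\bigr)\;-\;\bigl(d_{12}^2+d_{13}^2+d_{23}^2\bigr)\;+\;O(d^3),
\]
which is a genuinely quadratic symmetric form in the $d_{ij}$ and is not proportional to $d_{12}+d_{13}+d_{23}$. So for $M=3$ there is no reading of ``first-order approximation'' under which the claimed proportionality holds; the honest statement is that the linear term is proportional to the sum for $M\leq 2$ and vanishes for $M\geq 3$. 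Your $M=2$ computation and your $M>3$ argument are both fine as written.
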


\begin{proof}
Consider the case of a population size $M=3$, some  policy embedding $\phi_i$ and the exponentiated quadratic kernel. In this setting, the diversity, measured by the determinant of the kernel (or \textit{similarity}) matrix is as follows:

\begin{align*}
    \mathrm{det}(\mathbf{K}) &= \begin{vmatrix}
    1 & k(\mathbf{x}_1, \mathbf{x}_2)  &  k(\mathbf{x}_1, \mathbf{x}_3) \\
    k(\mathbf{x}_2, \mathbf{x}_1) & 1 &  k(\mathbf{x}_2, \mathbf{x}_3) \\
    k(\mathbf{x}_3, \mathbf{x}_1) &  k(\mathbf{x}_3, \mathbf{x}_2) & 1
    \end{vmatrix} \\ 
    &= 1 -  k(\mathbf{x}_2, \mathbf{x}_3) k(\mathbf{x}_3, \mathbf{x}_2) 
    -  k(\mathbf{x}_1, \mathbf{x}_2)\big( k(\mathbf{x}_2, \mathbf{x}_1) -  k(\mathbf{x}_3, \mathbf{x}_1) k(\mathbf{x}_2, \mathbf{x}_3)\big) 
    +  \\
    &\quad k(\mathbf{x}_1, \mathbf{x}_3)\big( k(\mathbf{x}_2, \mathbf{x}_1) k(\mathbf{x}_3, \mathbf{x}_2) -  k(\mathbf{x}_3, \mathbf{x}_1)\big) \\
    &= 1 -k(\mathbf{x}_1, \mathbf{x}_2)^2-k(\mathbf{x}_1, \mathbf{x}_3)^2- k(\mathbf{x}_2, \mathbf{x}_3)^2  + 2k(\mathbf{x}_1, \mathbf{x}_2)k(\mathbf{x}_1, \mathbf{x}_3)k(\mathbf{x}_2, \mathbf{x}_3)
\end{align*}
So if we take $k$ to be the squared exponential kernel:
\begin{align*}
    =& 1 - \exp\left(\frac{-\|x_1 - x_2\|^2}{l}\right) - \exp\left(\frac{-\|x_1 - x_3\|^2}{l}\right) - \exp\left(\frac{-\|x_2 - x_3\|^2}{l}\right) + \\
    &2\exp\left(\frac{-\|x_1 - x_2\|^2-\|x_1 - x_3\|^2-\|x_2 - x_3\|^2}{2l}\right)
\end{align*}

Recall that for $|x| << 1$ small enough, $\exp(x) \approx 1+x$. Substituting this approximation in the expression above we see: 

\begin{align*}
    \mathrm{det}(\mathbf{K})&\approx \|x_1 - x_2\|^2 + \|x_1 - x_3\|^2 + \|x_2 - x_3\|^2 - \frac{\|x_1 - x_2\|^2 + \|x_1 - x_3\|^2 + \|x_2 - x_3\|^2}{2} \\
    &= \frac{\|x_1 - x_2\|^2 + \|x_1 - x_3\|^2 + \|x_2 - x_3\|^2}{2}, 
\end{align*}
which is essentially the mean pairwise $l_2$ distance. What can we say about these differences (e.g. $\exp$ vs. not)? Does this same difference generalize to $M > 3$?

\textbf{Approximation for $M > 3$}

Recall that for a matrix $\mathbf{A}\in \mathbb{R}^{M \times M}$, the determinant can be written as:

\begin{equation*}
    \mathrm{det}(\mathbf{A}) = \sum_{\sigma \in \mathbb{S}_M} \text{sign}(\sigma) \prod_{i=1}^M \mathbf{A}_{i,\sigma(i)}
\end{equation*}

Where $\mathbb{S}_M$ denotes the symmetric group over $M$ elements. Lets identify $A_{i,j} = \exp\left(-\frac{\| x_i - x_j \|^2 }{2}\right)$. Notice that for any $\sigma \in \mathbb{S}_M$, we have the following approximation:

\begin{equation}\label{eq::approximation_fixed_sigma}
    \prod_{i=1}^M \mathbf{A}_{i, \sigma(i)} \approx 1- \sum_{i=1}^M \frac{ \| x_i - x_{\sigma(i)} \|^2 }{2}
\end{equation}

Whenever for all $i,j \in [M]$ the value of $\| x_i- x_j \|^2$ is small.

We are interested in using this termwise approximation to compute an estimate of $\text{det}(\mathbf{A})$. Plugging the approximation in Equation \ref{eq::approximation_fixed_sigma} into the formula for the determinant yields the following:
\begin{align*}
    \text{det}(\mathbf{A}) &\approx \sum_{\sigma \in \mathbb{S}_M} \text{sign}(\sigma) \left(    1- \sum_{i=1}^M \frac{ \| x_i - x_{\sigma(i)} \|^2 }{2}  \right) \\
    &= \underbrace{\sum_{\sigma \in \mathbb{S}_M} \text{sign}(\sigma) }_{\text{\MakeUppercase{\romannumeral 1}}}-  \underbrace{\sum_{\sigma \in \mathbb{S}_M}  \text{sign}(\sigma) \sum_{i=1}^M \frac{ \| x_i - x_{\sigma(i)} \|^2 }{2}}_{\text{\MakeUppercase{\romannumeral 2}}}
\end{align*}

Term \MakeUppercase{\romannumeral 1} equals zero as it is the sum of all signs of the permutations of $\mathbb{S}_n$ and $n > 1$.

In order to compute the value of \MakeUppercase{\romannumeral 2} we observe that by symmetry:
\begin{equation*}
    \text{\MakeUppercase{\romannumeral 2}} = B \sum_{i < j} \| x_i - x_j \|^2 
\end{equation*}

For some $B \in \mathbb{R}$. We show that $B = 0$ for $M > 3$. Let's consider the set $B_{1,2}$ of permutations $\sigma \in \mathbb{S}_M$ for which the sum $\sum_{i=1}^M \frac{\| x_i - x_{\sigma(i)}\|^2}{2}$ contains the term $\frac{\|x_1 - x_2\|^2}{2}$. Notice that $B = \frac{1}{2} \sum_{\sigma \in B_{1,2}} \text{sign}( \sigma) $. Let's characterize $B_{1,2}$ more exactly. 

Recall every permutation $\sigma \in \mathbb{S}_M$ can be thought of as a product of cycles. For more background on the cycle decomposition of permutations see \cite{stanley2011enumerative}. 

The term $\frac{\|x_1 - x_2\|^2}{2}$ appears whenever the cycle decomposition of $\sigma$ contains a transition of the form $1 \rightarrow 2$ or $1 \leftarrow 2$. It appears twice if the cycle decomposition of $\sigma$ has the cycle corresponding to a single transposition $1 \leftrightarrow 2$.

Let $\overrightarrow{B}_{1,2}$ be the set of permutations containing a transition of the form $1 \rightarrow 2$ (and no transition of the form $1 \leftarrow 2$) $\overleftarrow{B}_{1,2}$ be the set of permutations containing a transition of the form $1 \leftarrow 2$ (and no transition of the form $1\rightarrow 2$) and finally $\overleftrightarrow{B}_{1,2}$ be the set of permutations containing the transition $1 \leftrightarrow 2$.

Notice that:

\begin{equation*}
    B = \underbrace{\sum_{ \sigma \in  \overrightarrow{B}_{1,2}} \text{sign}(\sigma)}_{O_1} + \underbrace{\sum_{ \sigma \in  \overleftarrow{B}_{1,2}} \text{sign}(\sigma)}_{O_2} + 2 \underbrace{\sum_{ \sigma \in  \overleftrightarrow{B}_{1,2}} \text{sign}(\sigma)}_{O_3}
\end{equation*}

We start by showing that for $M > 3$, $O_3 = 0$. Indeed, any $\sigma \in \overleftrightarrow{B}_{1,2}$ has the form $\sigma = (1,2) \sigma'$ where $\sigma'$ is the cycle decomposition of a permutation over $3, \cdots, M$. Consequently $\text{sign}(\sigma) = -\text{sign}(\sigma')$. Iterating over all possible $\sigma' \in \mathbb{S}_{M-2}$ permutations over $[3, \cdots, M]$ yields the set $\overleftrightarrow{B}_{1,2}$ and therefore:
\begin{align*}
    O_3 &= - \sum_{\sigma' \in\mathbb{S}_{M-2}} \text{sign}(\sigma')\\
    &= 0
\end{align*}
The last equality holds because $M-2 \geq 2$. We proceed to analyze the terms $O_1$ and $O_2$. By symmetry it is enough to focus on $O_1$. Let $c$ be a fixed cycle structure containing the transition $1\rightarrow 2$. Any $\sigma \in\overrightarrow{B}_{1,2} $ containing $c$ can be written as $\sigma = c \sigma'$ where $\sigma'$ is a permutation over the remaining elements of $\{ 1, \cdots, M \} \backslash c$ and therefore $\text{sign}(\sigma) = (-1)^{|c| - 1} \text{sign}(\sigma')$. Let $\overrightarrow{B}_{1,2}^c$ be the subset of $\overrightarrow{B}_{1,2}$ containing $c$.

Notice that:

\begin{align*}
    O_1 &= \sum_{ c | 1\rightarrow 2 \in c, |c| \geq 3} \underbrace{\left[ \sum_{\sigma \in  \overrightarrow{B}_{1,2}^c } \text{sign}(\sigma) \right]}_{O_1^c}
\end{align*}

Let's analyze $O_1^c$:

\begin{align*}
    O_1^c &=  \sum_{\sigma \in  \overrightarrow{B}_{1,2}^c } \text{sign}(\sigma) \\
    &=(-1)^{|c|-1} \sum_{\sigma \in  \mathbb{S}_{M - |c|}} \text{sign}(\sigma)
\end{align*}

If $| \{ 1, \cdots, M \} \backslash c |\geq 2$ this quantity equals zero. Otherwise it equals $(-1)^{|c|-1}$. We recognize two cases, when  $| \{ 1, \cdots, M \} \backslash c | = 0$ and when $| \{ 1, \cdots, M \} \backslash c |= 1$ . The following decomposition holds \footnote{Here we use the assumption $M\geq4$ to ensure that both $\left|\{ c | 1\rightarrow 2 \in c, |c|\geq 3| \{ 1, \cdots, M \} \backslash c | = 0 \}\right|> 0$ and $\left|\{ c | 1\rightarrow 2 \in c, |c|\geq 3 | \{ 1, \cdots, M \} \backslash c | = 1 \}\right| > 0$. } :
\begin{align*}
    O_1 &= \Big|\{ c | 1\rightarrow 2 \in c, |c|\geq 3| \{ 1, \cdots, M \} \backslash c | = 0 \}\Big|*(-1)^{M-1} + \\
    &\left|\{ c | 1\rightarrow 2 \in c, |c|\geq 3 | \{ 1, \cdots, M \} \backslash c | = 1 \}\right|*(-1)^{M-2}
\end{align*}
A simple combinatorial argument shows that:
\begin{equation*}
    \left|\{ c | 1\rightarrow 2 \in c, | \{ 1, \cdots, M \} \backslash c | = 0 \}\right| = (M-2)!
\end{equation*}
Roughly speaking this is because in order to build a size $M$ cycle containing the transition $1\rightarrow 2$, we only need to decide on the positions of the next $M-2$ elements, which can be shuffled in $M-2$ ways.
Similarly, a simple combinatorial argument shows that:
\begin{equation*}
    \left|\{ c | 1\rightarrow 2 \in c, | \{ 1, \cdots, M \} \backslash c | = 1 \}\right| = (M-2)!
\end{equation*}
A similar counting argument yields this result. First, there are $M-2$ ways of choosing the element that will not be in the cycle. Second, there are $(M-3)!$ ways of arranging the remaining elements to fill up the $M-3$ missing slots of the $M-1$ sized cycle $c$. 

We conclude that in this case $O_1 = 0$.

\end{proof}

This result implies two things:
\begin{enumerate}
    \item When $M \leq 3$. If the gradient of the embedding vectors is sufficiently small, the determinant penalty is up to first order terms equivalent to a pairwise distances score. This may not be true if the embedding vector's norm is large. 
    \item When $M > 3$. The determinant diversity penalty variability is given by its higher order terms. It is therefore not equivalent to a pairwise distances score.
\end{enumerate}

\section{Extended Background}

For completeness, we provide additional context for existing methods used in the paper.

\subsection{Thompson Sampling}
\label{ts}

Let's start by defining the Thompson Sampling updates for Bernoulli random variables. We borrow the notation from Section \ref{section::adaptive_exploration}. Let $\mathcal{K} = \{ 1, \cdots, K\}$ be a set of Bernoulli arms with mean parameters $\{ \mu_i\}_{i=1}^K$. 

Denote by $\pi_t^i$ the learner's mean reward model for arm $i$ at time $t$. We let the learner begin with an independent prior belief over each $\mu_i$, which we denote $\pi_o^i$. These priors are beta-distributed with parameters $\alpha^o_i=1, \beta^o_i=1$:
\begin{equation*}
    \pi_o^i(\mu) = \frac{ \Gamma(\alpha_i^o  + \beta_i)}{\Gamma(\alpha^o_i) \Gamma(\beta_i)  }\mu^{\alpha^o_i-1}(1-\mu)^{\beta^o_i-1},
\end{equation*}

Where $\Gamma$ denotes the gamma function. It is convenient to use beta distributions because of their conjugacy properties. It can be shown that whenever we use a beta prior, the posterior distribution is also a beta distribution. Denote $\alpha_i^i, \beta_i^t$ as the values of parameters $\alpha_i, \beta_i$ at time $t$.

Let $i_t$ be the arm selected by Thomson Sampling, as we explained in main body, at time $t$. After observing reward $r_t \in \{0,1\}$ the arms posteriors are updated as follows: 

\begin{equation*}
    (\alpha_i^{t+1}, \beta_i^{t+1}) =  \begin{cases} 
        (\alpha_i^t + r_t, \beta_i^t + (1-r_t) )  & \text{if } i = i_t\\
        (\alpha_i^t, \beta_i^t) &\text{o.w. }
        \end{cases}
\end{equation*}

\subsection{Reinforcement Learning Algorithms}
\label{sec:policy_gradients}

\subsubsection{Evolution Strategies}

ES methods cast RL as a blackbox optimization problem.
Since a blackbox function $F: \mathbb{R}^{d} \rightarrow \mathbb{R}$
may not even be differentiable, in practice its smoothed variants are considered. One of the most popular ones, the Gaussian smoothing \cite{nesterov} $F_{\sigma}$ of a function $F$ is defined as: 
\begin{align}
    F_{\sigma}(\theta) &= \mathbb{E}_{\mathbf{g} \in \mathcal{N}(0,\mathbf{I}_{d})}[F(\theta + \sigma \mathbf{g})] \notag \\
    &= (2\pi)^{-\frac{d}{2}} \int_{\mathbb{R}^{d}}F(\theta + \sigma \mathbf{g})e^{-\frac{\|\mathbf{g}\|^{2}}{2}}d\mathbf{g}, \notag
\end{align}
where $\sigma>0$ is a hyperparameter quantifying the smoothing level. Even if $F$ is nondifferentiable, we can easily obtain stochastic gradients for $F_\sigma$. The gradient of the Gaussian smoothing of $F$ is given by the formula:
\begin{equation}
\label{grad}
\nabla F_{\sigma}(\theta)=\frac{1}{\sigma}\mathbb{E}_{\mathbf{g} \sim \mathcal{N}(0,\mathbf{I}_{d})}[F(\theta + \sigma \mathbf{g})\mathbf{g}].    
\end{equation}
This equation leads to several Monte Carlo gradient estimators used successfully in Evolution Strategies (ES, \cite{ES, choromanski_orthogonal}) algorithms for blackbox optimization in RL. Consequently, it provides gradient-based policy update rules such as:
\begin{equation}
\label{simple-est}
\theta_{t+1} = \theta_{t} + \eta \frac{1}{k\sigma} \sum_{i=1}^{k}
R_{i}\mathbf{g}_{i},
\end{equation}
where $R_{i}=F(\theta_{t}+\sigma \mathbf{g}_{i})$ is the reward for perturbed policy $\theta_{t}+\sigma \mathbf{g}_{i}$ and $\eta>0$ stands for the step size.

In practice Gaussian independent perturbations can be replaced by dependent ensembles to further reduce variance of the Monte Carlo estimator of $\nabla F_{\sigma}(\theta)$ via quasi Monte Carlo techniques.

\subsubsection{Novelty Search}

In the context of population-based Reinforcement Learning, one prominent approach is the class of novelty search methods for RL \cite{Lehman08exploitingopen,lehmannovelty, novelty}. The NSR-ES algorithm \cite{novelty} maintains a meta-population of $M$ agents, and at each iteration $t$ sequentially samples and an individual member $\theta_t^m$. This agent is perturbed with samples $\mathbf{g}^{m}_{1}, \cdots, \mathbf{g}^{m}_{k} \sim \mathcal{N}(0,\mathbf{I}_{d})$, and then the rewards $R^{m}_{i}=F(\theta_t^m + \sigma \mathbf{g}^{m}_{i})$ and embeddings  $\Phi(\theta_t^m + \sigma \mathbf{g}^{m}_{i})$ are computed in parallel. The \textit{novelty} of a perturbed policy is then computed as the mean Euclidean distance of its embedding to the embeddings $\Phi(\theta_t^i)$ of the remaining members of the population for $i \neq m$. In order to update the policy, the rewards and novelty scores are normalized (denoted $\widehat{R}^{m}_i$ and $\widehat{N}^{m}_i$), and the policy is updated as follows:
\begin{equation}
\label{eqn:nsres}
    \theta_{t+1}^m = \theta_{t}^m + \frac{\eta}{k\sigma} \sum_{i=1}^{k} [(1-\lambda)\widehat{R}^{m}_{i} +\lambda \widehat{N}^{m}_{i}]\mathbf{g}_i,
\end{equation}

where the novelty weight $\lambda > 0$ is a hyperparameter. A value $\lambda=0$ corresponds to the standard ES approach (see: Eq.\ref{simple-est}) whereas the algorithm with $\lambda=1$ neglects the reward-signal and optimizes solely for diversity. 
A simple template of this approach, with a fixed population size, appears in Alg.  \ref{Alg:simple_algo}. 

Despite encouraging results on hard exploration environments, these algorithms contain several flaws. They lack a rigorous means to evaluate the diversity of the population as a whole, this means that $M$ policies may fall into $N < M$ conjugacy classes, leading to the illusion of a diverse population on a mean pairwise Euclidean distance metric. 

\begin{algorithm}[H]
\textbf{Input:} : learning rate $\eta$, noise standard deviation $\sigma$, number of policies to maintain $M$, number of iterations $T$, embedding $\Phi$, novelty weight $\lambda$. \\
\textbf{Initialize:} $\{\theta_0^1, \dots, \theta_0^M\}$.  \; \\
\For{$t=0, 1, \ldots, T-1 $}{
  1. Sample policy to update: $\theta_t^m \sim \{\theta_t^1, \dots, \theta_t^M\}$. \; \\
  2. Compute rewards $F(\theta_t^m + \sigma \mathbf{g}_k)$ for all $\mathbf{g}_1, \cdots, \mathbf{g}_k$, sampled independently from $\mathcal{N}(0,\mathbf{I}_{d})$. \; \\
  3. Compute embeddings $\Phi(\theta_t^m + \sigma \mathbf{g}_k)$ for all $k$. \; \\
  4. Let $\widehat{R}_k$ and $\widehat{N}_k$ be the normalized reward and novelty for each perturbation $\mathbf{g}_k$. \; \\
  5. Update Agent via Equation \ref{eqn:nsres}.
}
 \caption{Population-Based NSR-ES}
\label{Alg:simple_algo}
\end{algorithm}

\end{document}